\newcommand{\OmegaSpace}{\Omega}
\newcommand{\transp}{^\top}
\newcommand{\proba}{P}
\newcommand{\grad}{\nabla}
\newcommand{\U}{\bm{U}}
\newcommand{\complexSpace}{\mathbb{C}}
\renewcommand{\Re}{\mathbb{R}}
\newcommand{\C}{\complexSpace} 
\newcommand{\R}{\Re} 
\newcommand{\real}{\mathrm{Re}}
\newcommand{\imag}{\mathrm{Im}}
\newcommand{\srank}{\mathrm{rank}_{\pm}}
\newcommand{\sign}{\mathrm{sign}}
\providecommand{\U}[1]{\protect\rule{.1in}{.1in}}
\newcommand{\be}{\begin{equation}}
\newcommand{\ee}{\end{equation}}
\newcommand{\bd}{\begin{definition}}
\newcommand{\ed}{\end{definition}}
\newcommand{\ba}{\begin{algorithm}}
\newcommand{\ea}{\end{algorithm}}
\newcommand{\br}{\begin{problem}}
\newcommand{\er}{\end{problem}}
\newcommand{\bex}{\begin{example}}
\newcommand{\eex}{\end{example}}
\newcommand{\bt}{\begin{theorem}}
\newcommand{\et}{\end{theorem}}
\def\ifa{\iffalse}
\def\ifappendix{\iftrue}
\newcommand{\Relation}{X}
\newcommand{\ObsTensor}{\mathbf{Y}}
\newcommand{\ScoreTensor}{\mathbf{X}}
\newcommand{\EntitySpace}{\mathcal{E}}
\newcommand{\RelationSpace}{\mathcal{R}}
\newcommand{\TripleSpace}{\mathcal{T}}
\renewcommand{\cite}{\citep}
\newcommand{\Mnr}{\R^{n \times n}}
\newcommand{\Mnc}{\C^{n \times n}}
\newcommand{\rrank}{\mathrm{rank}}
\newcommand{\T}{\transp}
\newcommand{\pT}{^{\prime\top}}
\newtheorem{thm}{Theorem}
\newtheorem{defn}{Definition}
\newtheorem{cor}{Corollary}
\newcommand{\setent}{\mathcal{E}} 
\newcommand{\setrel}{\mathcal{R}} 
\newcommand{\Z}{Z} 
\newcommand{\X}{X} 
\newcommand{\eemb}{e} 
\newcommand{\Eemb}{E} 
\newcommand{\wemb}{w} 
\newcommand{\Wemb}{W} 
\newcommand{\rank}{K}
\newcommand{\icol}[1]{
  \left(\begin{smallmatrix}#1\end{smallmatrix}\right)%
}
\newsavebox{\vecs}
\savebox{\vecs}{$\left(\begin{smallmatrix}1\\-2\end{smallmatrix}\right)$}
\newsavebox{\veco}
\savebox{\veco}{$\left(\begin{smallmatrix}-3\\1\end{smallmatrix}\right)$}
\def\tt{\texttt}
\begin{document}

\title{Knowledge Graph Completion via Complex Tensor Factorization}

\author{\name Th\'eo Trouillon \email theo.trouillon@imag.fr \\
       \addr Univ. Grenoble Alpes, 700 avenue Centrale, 38401 Saint Martin d'H\`eres, France
       \AND
       \name Christopher R. Dance \email chris.dance@xrce.xerox.com \\
       \addr NAVER LABS Europe, 6 chemin de Maupertuis, 38240 Meylan, France
       \AND
       \name \'Eric Gaussier \email eric.gaussier@imag.fr \\
       \addr Univ. Grenoble Alpes, 700 avenue Centrale, 38401 Saint Martin d'H\`eres, France
       \AND
       \name Johannes Welbl \email j.welbl@cs.ucl.ac.uk \\
       \name Sebastian Riedel \email s.riedel@cs.ucl.ac.uk \\
       \addr University College London, Gower St, London WC1E 6BT, United Kingdom 
       \AND
       \name Guillaume Bouchard \email g.bouchard@cs.ucl.ac.uk \\
       \addr Bloomsbury AI, 115 Hampstead Road, London NW1 3EE, United Kingdom \\
       \addr University College London, Gower St, London WC1E 6BT, United Kingdom
       }

\editor{Luc De Raedt}

\maketitle

\begin{abstract}

In statistical relational learning, knowledge graph completion deals with automatically understanding the structure of large knowledge graphs---labeled directed graphs---and predicting missing relationships---labeled edges. State-of-the-art embedding models propose different trade-offs between modeling expressiveness, and time and space complexity. We reconcile both expressiveness and complexity through the use of complex-valued embeddings and explore the link between such complex-valued embeddings and unitary diagonalization. We corroborate our approach theoretically and show that \emph{all} real square matrices---thus all possible relation/adjacency matrices---are the real part of some unitarily diagonalizable matrix. This results opens the door to a lot of other applications of square matrices factorization. Our approach based on complex embeddings is arguably simple, as it only involves a Hermitian dot product, the complex counterpart of the standard dot product between real vectors, whereas other methods resort to more and more complicated composition functions to increase their expressiveness. The proposed complex embeddings are scalable to large data sets as it remains linear in both space and time, while consistently outperforming alternative approaches on standard link prediction benchmarks.\footnote{Code is available at: \url{https://github.com/ttrouill/complex}}
\end{abstract}

\begin{keywords}
complex embeddings, tensor factorization, knowledge graph, matrix completion, statistical relational learning
\end{keywords}

\section{Introduction}

Web-scale knowledge graph provide a structured representation of world knowledge, with projects such as the Google Knowledge Vault~\cite{Dong:2014:KnowledgeVault}. 
They enable a wide range of applications including recommender systems, question answering and automated personal agents. The incompleteness of these knowledge graphs---also called knowledge bases---has stimulated research into predicting missing entries, a task known as \emph{link prediction} or \emph{knowledge graph completion}. 
The need for high quality predictions required by link prediction applications made 
it progressively become the main problem in statistical relational learning \cite{Getoor2007}, 
a research field interested in relational data representation and modeling.




Knowledge graphs were born with the advent of the Semantic Web, pushed
by the World Wide Web Consortium (W3C) recommendations. Namely,
the Resource Description Framework (RDF) standard, that underlies
knowledge graphs' data representation, provides for the first time
a common framework across all connected information systems
to share their data under the same paradigm. 
Being more expressive than classical relational databases,
all existing relational data can be translated into RDF knowledge graphs \cite{sahoo2009survey}.

Knowledge graphs express data as a directed graph with labeled edges (relations) between nodes (entities). Natural redundancies between the recorded relations often make it possible to fill in the missing entries of a knowledge graph. 
As an example, the relation \tt{CountryOfBirth} could not be recorded for all entities, 
but it can be inferred if the relation \tt{CityOfBirth} is known. 
The goal of link prediction is the automatic discovery of such regularities. However, many relations are non-deterministic:  the combination of the two facts \tt{IsBornIn(John,Athens)} and \tt{IsLocatedIn(Athens,Greece)} does not always imply the fact \tt{HasNationality(John,Greece)}. 
Hence, it is natural to handle inference probabilistically, and jointly with 
other facts involving these relations and entities.
To this end, an increasingly popular method is to state the knowledge 
graph completion task as a 3D binary tensor completion problem, where 
each tensor slice is the adjacency matrix of one relation in the 
knowledge graph, and compute a decomposition of this partially-observed tensor
from which its missing entries can be completed.

Factorization models with low-rank embeddings were popularized by the Netflix challenge \cite{koren_netflix}. A partially-observed matrix or tensor is decomposed into a product of embedding matrices with much smaller dimensions, resulting in fixed-dimensional vector representations 
for each entity and relation in the graph, that allow completion of the missing entries.
For a given fact \emph{r(s,o)} in which the subject entity $s$ is linked to the object entity $o$ through the relation $r$, a score for the fact can be recovered as a multilinear product between the embedding vectors of $s$, $r$ and $o$, or through more sophisticated
composition functions~\cite{nickel_2016_review}.


Binary relations in knowledge graphs exhibit various types of patterns: hierarchies and compositions like \tt{FatherOf}, \tt{OlderThan} or \tt{IsPartOf}, with strict/non-strict orders or preorders, and  equivalence relations like \tt{IsSimilarTo}. These characteristics maps to different combinations of the following properties: reflexivity/irreflexivity, symmetry/antisymmetry and transitivity. As described in \citet{Bordes2013}, a relational model should (i) be able to learn all combinations of such properties, and (ii) be linear in both time and memory in order to scale to the size of present-day knowledge graphs, and keep up with their growth. 

A natural way to handle any possible set of relations is to use the classic canonical
polyadic (CP) decomposition \cite{hitchcock-sum-1927}, which yields two different embeddings
for each entity and thus low prediction performances as shown in Section \ref{sec:expe}.
With unique entity embeddings, multilinear products scale well and can naturally handle both symmetry and (ir)-reflexivity of relations, and when combined with an appropriate loss function, dot products can even handle transitivity \cite{bouchard2015approximate}.
However, dealing with antisymmetric---and more generally asymmetric---relations has so far almost always implied superlinear time and space complexity \cite{Nickel2011,socher2013reasoning} (see Section \ref{sec:mat_case}), making models prone to overfitting and not scalable. Finding the best trade-off between expressiveness, generalization and complexity is the keystone of embedding models.

In this work, we argue that the standard dot product between embeddings can be a very effective composition function, provided that one uses the right \emph{representation}: instead of using embeddings containing real numbers, we discuss and demonstrate the capabilities of complex embeddings. When using complex vectors, that is vectors with entries in $\complexSpace$,  the dot product is often called the \emph{Hermitian} (or sesquilinear) dot product, as it involves the conjugate-transpose of one of the two vectors. 
As a consequence, the dot product is not symmetric any more, and facts about one relation can receive different scores depending on the ordering of the entities involved in the fact. 
In summary, complex embeddings naturally represent arbitrary relations while retaining the efficiency of a dot product, that is linearity in both space and time complexity. 


This paper extends a previously published article \cite{trouillon2016}. This extended version adds proofs of existence of the proposed model in both single and multi-relational settings, as well as proofs of the non-uniqueness of the complex embeddings for a given relation. Bounds on the rank of the proposed decomposition are also demonstrated and discussed. The learning algorithm is provided in more details, and more experiments are provided, especially regarding the training time of the models.

The remainder of the paper is organized as follows. We first provide justification and intuition for using complex embeddings in the square matrix case (Section \ref{sec:mat_case}), where there is only a single type of relation between entities, and show the existence of the proposed decomposition for all possible relations. The formulation is then extended to a stacked set of square matrices in a third-order tensor to represent multiple relations (Section \ref{sec:tens_case}). The stochastic gradient descent algorithm used to learn the model is detailed in Section \ref{sec:algo}, where we present an equivalent reformulation of the proposed model that involves only real embeddings. This should help practitioners when implementing our method, without requiring the use of complex numbers in their software implementation. We then describe experiments on large-scale public benchmark knowledge graphs in which we empirically show that this representation leads not only to simpler and faster algorithms, but also gives a systematic accuracy improvement over current state-of-the-art alternatives (Section \ref{sec:expe}). Related work is discussed in Section \ref{sec:rel_work}.

\section{Relations as the Real Parts of Low-Rank Normal Matrices}
\label{sec:mat_case}

We consider in this section a simplified link prediction task with a single relation,
and introduce complex embeddings for low-rank matrix factorization.

We will first discuss the desired properties of embedding models,
show how this problem relates to the spectral theorems, 
and discuss the classes of matrices
these theorems encompass in the real and in the complex case. 
We then propose a new matrix decomposition---to the best of our knowledge---and a 
proof of its existence for all real square matrices.
Finally we discuss the rank of the proposed decomposition.

\subsection{Modeling Relations}
Let $\EntitySpace$ be a set of entities, with $|\EntitySpace|=n$.
The truth of the single relation holding between two entities is represented by a sign value $y_{so}\in\{-1,1\}$, where 1 represents true facts and -1 false facts, $s\in\EntitySpace$ is the subject entity and $o\in\EntitySpace$ is the object entity. The probability for the relation holding true is given by
\begin{equation}
    \proba(y_{so}=1) = \sigma(x_{so})
    \enspace
    \label{observation-model0}
\end{equation}
where $X\in\R^{n\times n}$ is a latent matrix of scores indexed by the subject (rows) and object entities (columns),  $Y$ is a partially-observed sign matrix indexed in identical fashion, and $\sigma$ is a suitable sigmoid function. Throughout this paper we used the logistic inverse link function $\sigma(x) = \frac{1}{1+\mathrm{e}^{-x}}$.

\subsubsection{Handling Both Asymmetry and Unique Entity Embeddings}

In this work we pursue three objectives: finding a generic structure for $X$ that leads to $(i)$ a computationally efficient model, $(ii)$ an expressive enough approximation 
of common relations in real world knowledge graphs, and $(iii)$ good
generalization performances in practice.
Standard matrix factorization approximates $X$ by 
a matrix product $UV\transp$, where $U$ and $V$ are two functionally-independent $n\times K$ matrices, $K$ being the rank of the matrix. Within this formulation it is assumed that entities appearing as subjects are different from entities appearing as objects. In the Netflix challenge \cite{koren_netflix} for example, each row $u_i$ corresponds to the user $i$ and each column $v_j$ corresponds to the movie $j$.
This extensively studied type of model is closely related to the singular value decomposition (SVD) and fits well with the case where the matrix $X$ is rectangular. 

However, in many knowledge graph completion problems, the same entity $i$ can appear as both subject
\emph{or} object and will have two different embedding vectors, $u_i$ and $v_i$, depending on whether it appears as subject or object of a relation. 
It seems natural to learn unique embeddings of entities, as 
initially proposed by \citet{Nickel2011} and \citet{bordes2011learning} 
and since then used systematically in other prominent approaches \cite{bordes2013translating,Yang2015,socher2013reasoning}. 
In the factorization setting, using the same embeddings for left- and right-side factors boils down to a specific case of eigenvalue decomposition: \emph{orthogonal diagonalization}.

\newcommand{\normalSpace}{\mathcal{H}}

\begin{defn}
A real square matrix $\X \in \Mnr$ is orthogonally diagonalizable if it can be
written as $\X= EWE\T$, where $E, W \in \Mnr$, $W$ is diagonal, and $\Eemb$ orthogonal so that $\Eemb\Eemb\T = \Eemb\T \Eemb = I$ where $I$ is the identity matrix.
\end{defn}

The spectral theorem for symmetric matrices tells us that a matrix 
is orthogonally diagonalizable if and only if it is symmetric \cite{cauchy1829}.
It is therefore often used to approximate covariance matrices, kernel functions and distance or similarity matrices.

However as previously stated, this paper is explicitly interested
in problems where matrices---and thus the \mbox{relation patterns} they represent---can 
also be antisymmetric, or even not have any particular symmetry pattern at all (asymmetry). In order to both use a unique embedding for entities and extend the expressiveness to asymmetric relations, researchers have generalised the notion of dot products to \emph{scoring functions}, also known as \emph{composition functions}, that allow more general combinations of embeddings. We briefly recall several examples of scoring functions in Table~\ref{tab:scoring}, as well
as the extension proposed in this paper. 

These models propose different trade-offs between the three essential points:

\begin{itemize}
    \item Expressiveness, which is the ability to represent symmetric, antisymmetric and more generally asymmetric relations.
    \item Scalability, which means keeping linear time and space complexity scoring function.
    \item Generalization, for which having unique entity embeddings is critical.
\end{itemize}

RESCAL \cite{Nickel2011} and NTN \cite{socher2013reasoning} are very expressive,
but their scoring functions have quadratic complexity in the rank of the factorization. 
More recently the \textsc{HolE} model \cite{nickel_2016_holographic}
proposes a solution that has quasi-linear complexity in time and linear space complexity.
\textsc{DistMult} \cite{Yang2015} can be seen as a joint orthogonal 
diagonalization with real embeddings,
hence handling only symmetric relations.
Conversely, \textsc{TransE} \cite{bordes2013translating} handles symmetric relations
to the price of strong constraints on its embeddings.
The canonical-polyadic decomposition (CP) \cite{hitchcock-sum-1927}
generalizes poorly with its different embeddings for entities as subject and as object.

We reconcile expressiveness, scalability and generalization by going back to the realm
of well-studied matrix factorizations, and making use of complex linear algebra,
a scarcely used tool in the machine learning community.

\begin{table*}[t]
    \centering
    
    \resizebox{\columnwidth}{!}{%
    
    \begin{tabular}{|m{5cm}|m{5cm}|m{4cm}|m{1.9cm}|m{1.4cm}|@{}m{0pt}@{}}
        \hline
        \textbf{Model} &
        \textbf{Scoring Function $\phi$} &
        \textbf{Relation Parameters} &
        \textbf{$\mathcal{O}_{time}$}&
        \textbf{$\mathcal{O}_{space}$}
        &\\[5pt]
        \hline
        CP \cite{hitchcock-sum-1927} &  
        $\left<w_r, u_s, v_o\right>$ &
        $w_r \in\Re^{K}$ &
        $\mathcal{O}(K)$& 
        $\mathcal{O}(K)$
        &\\[5pt]
        \hline
        RESCAL \cite{Nickel2011} &
        $e_s^T W_r e_o$ &
        $W_r\in\Re^{K^2}$&
        $\mathcal{O}(K^2)$& 
        $\mathcal{O}(K^2)$
        &\\[5pt]
        \hline
        \textsc{TransE}  \cite{bordes2013translating}&
        $-||(e_s + w_r) - e_o||_p$ &
        $w_r\in\Re^K$ &
        $\mathcal{O}(K)$& 
        $\mathcal{O}(K)$
        &\\[5pt]
        \hline
        NTN  \cite{socher2013reasoning}&
        $u_r\transp f(e_s W_r^{[1..D]}e_o + V_r \begin{bmatrix} e_s 
        \\  e_o \end{bmatrix} + b_r)$ &
        \pbox{20cm}{$W_r\in\Re^{K^2 D}, b_r\in\Re^{K}$\\$V_r \in\Re^{2KD} ,u_r\in\Re^{K} $} &
        $\mathcal{O}(K^2D)$& 
        $\mathcal{O}(K^2D)$
        &\\[20pt]
        \hline
        \textsc{DistMult} \cite{Yang2015}&  
        $\left<w_r, e_s, e_o\right>$ &
        $w_r \in\Re^K$ &
        $\mathcal{O}(K)$& 
        $\mathcal{O}(K)$
        &\\[5pt]
        \hline
        \textsc{HolE} \cite{nickel_2016_holographic}&
        $w_r^T ( \mathcal{F}^{-1}[\overline{\mathcal{F}[e_s]} \odot \mathcal{F}[e_o]]))$
        &
        $w_r \in\Re^K$ &
        $\mathcal{O}(K\log K)$&
        $\mathcal{O}(K)$
        &\\[5pt]
        \hline
        \textsc{ComplEx} (this paper)& 
        $\real(\left<w_r, e_s, \bar{e}_o\right>)$ &
        $w_r\in\complexSpace^{K}$&
        $\mathcal{O}(K)$& 
        $\mathcal{O}(K)$
        &\\[5pt]
        \hline
    \end{tabular}
    }
    \caption{
     Scoring functions of state-of-the-art latent factor models for a given fact $r(s,o)$, along with the representation of their relation parameters, and time and space (memory) complexity. $K$ is the dimensionality of the embeddings. The entity embeddings $e_s$ and $e_o$ of subject $s$ and object $o$ are in $\R^K$ for each model, except for \textsc{ComplEx}, where $e_s,e_o \in \C^K$. $\bar{x}$ is the complex conjugate, and $D$ is an additional latent dimension of the NTN model. 
    $\mathcal{F}$ and $\mathcal{F}^{-1}$ denote respectively the Fourier transform and its inverse,  $\odot$ is the element-wise product between two vectors, $\real(.)$ denotes the real part of a complex vector, and $\left<\cdot,\cdot,\cdot\right>$ denotes the trilinear product.
    }
    \label{tab:scoring}
    \vspace{-.5cm}
\end{table*}



\subsubsection{Decomposition in the Complex Domain}

We introduce a new decomposition of real square matrices using unitary
diagonalization, the generalization of orthogonal diagonalization
to complex matrices. This allows decomposition of \emph{arbitrary} real square matrices
with unique representations of rows and columns.

Let us first recall some notions of complex linear algebra as well as
specific cases of diagonalization of real square matrices, before building
our proposition upon these results.

A complex-valued vector $x\in\C^K$, with $x=\real(x) + i\imag(x)$ is composed 
of a real part
$\real(x)\in\R^K$ and an imaginary part 
$\imag(x)\in\R^K$, where $i$ denotes the square root of $-1$.
The conjugate  $\overline{x}$ of a complex vector inverts the sign
of its imaginary part: $\overline{x}=\real(x) - i\imag(x)$.

Conjugation appears in the usual dot product for complex numbers,
called the \emph{Hermitian} product, or \emph{sesquilinear} form, which is defined as:
\begin{eqnarray}
    \left< u,v \right> &:=& \bar{u}\transp v\notag\\
    &=&\quad\enspace  \real(u)\T \real(v) + \imag(u)\T \imag(v) \notag\\
    && +i(\real(u)\T\imag(v) - \imag(u)\T \real(v) )\notag\,.
    \label{eqn:sesquilinear-dot}
\end{eqnarray}

A simple way to justify the Hermitian product for composing complex vectors is that it provides a valid topological norm in the induced vector space. For example, $\bar{x}\transp x=0$ implies $x=0$ while this is not the case for the bilinear form $x \transp x$ as there are many complex vectors $x$ for which $x\transp x=0$. 

This yields an interesting property of the Hermitian product concerning the
order of the involved vectors: $\left< u, v \right> = \overline{\left< v, u \right>}$, 
meaning that the real part of the product is symmetric,
while the imaginary part is antisymmetric.

For matrices, we shall write $\X^* \in \C^{n \times m}$ for the conjugate-transpose $\X^*= (\overline{\X})\T 
= \overline{\X\T}$. The conjugate transpose is also often written $\X^\dagger$ or $X^{\mathrm{H}}$.

\begin{defn}
A complex square matrix $\X \in \Mnc$ is unitarily diagonalizable if it can be
written as $\X= \Eemb \Wemb \Eemb^*$, where $\Eemb, \Wemb \in \Mnc$, $\Wemb$ is diagonal, and $\Eemb$ is unitary such that $\Eemb\Eemb^* = \Eemb^*\Eemb = I$.
\end{defn}

\begin{defn}
A complex square matrix $\X$ is normal if it commutes with its
conjugate-transpose so that $\X\X^* = \X^*\X$.
\end{defn}



We can now state the spectral theorem for normal matrices.

\begin{thm}[Spectral theorem for normal matrices, \citet{vonneumann1929}]
\label{spectral_thm}

Let $\X$ be a complex square matrix. Then $\X$ is unitarily diagonalizable if and only if
$\X$ is normal.

\end{thm}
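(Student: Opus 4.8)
The plan is to prove the two implications separately, treating the forward direction (unitarily diagonalizable $\Rightarrow$ normal) as a direct computation and resting the converse on the Schur decomposition.

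For the easy direction, suppose $\X = \Eemb \Wemb \Eemb^*$ with $\Eemb$ unitary and $\Wemb$ diagonal. Then $\X^* = \Eemb \Wemb^* \Eemb^*$, and using $\Eemb^* \Eemb = I$ I would compute $\X \X^* = \Eemb \Wemb \Wemb^* \Eemb^*$ and $\X^* \X = \Eemb \Wemb^* \Wemb \Eemb^*$. Since $\Wemb$ and its conjugate-transpose $\Wemb^*$ are both diagonal they commute, giving $\Wemb \Wemb^* = \Wemb^* \Wemb$, hence $\X \X^* = \X^* \X$ and $\X$ is normal.

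For the converse I would first invoke the Schur decomposition: every complex square matrix can be written as $\X = \Eemb T \Eemb^*$ with $\Eemb$ unitary and $T$ upper triangular. This is a standard result that follows by induction on $n$ from the fact that every complex matrix has at least one eigenvalue (the fundamental theorem of algebra). The point that makes the argument work is that normality is preserved under unitary conjugation: from $T = \Eemb^* \X \Eemb$ one obtains $T T^* = \Eemb^* \X \X^* \Eemb = \Eemb^* \X^* \X \Eemb = T^* T$, so $T$ is itself normal.

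The heart of the argument, and the step I expect to be the main obstacle, is showing that an upper triangular normal matrix must in fact be diagonal. The idea is to compare the diagonal entries of $T T^*$ and $T^* T$ one row at a time. Writing $T = (t_{ij})$ with $t_{ij} = 0$ for $i > j$, the $(1,1)$ entry of $T T^*$ equals $\sum_{k} |t_{1k}|^2$ whereas the $(1,1)$ entry of $T^* T$ equals just $|t_{11}|^2$; equating them forces every off-diagonal entry of the first row to vanish. Proceeding inductively, once rows $1,\dots,j-1$ are known to have no off-diagonal entries, the already-established vanishing collapses the $(j,j)$ comparison to $\sum_{k \ge j} |t_{jk}|^2 = |t_{jj}|^2$, which kills the remaining entries of row $j$. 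Hence $T$ is diagonal, and $\X = \Eemb T \Eemb^*$ is the desired unitary diagonalization with $\Wemb = T$.
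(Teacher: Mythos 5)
Your proof is correct, but there is nothing in the paper to compare it against: the paper does not prove this statement. It is quoted as a classical result (attributed to \citet{vonneumann1929}, with standard references such as Horn and Johnson), and the paper's own contribution starts with Theorem~\ref{main_thm}, which merely \emph{uses} the spectral theorem as a black box. What you have written is the standard textbook proof of that black box: the forward direction by direct computation, using $\Eemb^*\Eemb = I$ and the fact that the diagonal matrices $\Wemb$ and $\Wemb^*$ commute; the converse by Schur triangularization, the observation that normality is invariant under unitary conjugation, and the key lemma that an upper triangular normal matrix is diagonal. Your handling of that lemma is sound: the comparison $(TT^*)_{11} = \sum_k |t_{1k}|^2$ versus $(T^*T)_{11} = |t_{11}|^2$ kills the first row, and in the inductive step you correctly use the vanishing of the earlier rows to reduce $(T^*T)_{jj}$ to $|t_{jj}|^2$ before concluding $t_{jk}=0$ for $k>j$. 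So your write-up supplies a self-contained justification for what the paper simply cites; it is the right argument, with no gaps.
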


It is easy to check that all real symmetric matrices are normal, and have
pure real eigenvectors and eigenvalues.
But the set of purely real normal matrices also includes all
real antisymmetric matrices (useful to model hierarchical
relations such as \tt{IsOlder}), as well as 
all real orthogonal matrices (including permutation matrices), and many other matrices that are useful to represent binary relations, such as assignment matrices which represent bipartite graphs. However, far from all matrices expressed as $\X= \Eemb \Wemb \Eemb^*$ are
purely real, and Equation~(\ref{observation-model0}) requires the scores $X$ to be purely real.

As we only focus on \emph{real} square matrices in this work, let us
summarize all the cases where $\X$ is real square and $\X= \Eemb \Wemb \Eemb^*$
if $X$ is unitarily diagonalizable, where $\Eemb,\Wemb \in \Mnc$, $\Wemb$ is diagonal and $\Eemb$ is unitary:

\begin{itemize}
    \item $\X$ is symmetric if and only if $\X$ is orthogonally diagonalizable and $\Eemb$ and $\Wemb$ are purely real.
    \item $\X$ is normal and non-symmetric if and only if $\X$ is unitarily 
    diagonalizable and $\Eemb$ and $\Wemb$  are \emph{not} both purely real.
    
    \item $\X$ is not normal if and only if $\X$ is not unitarily diagonalizable.
\end{itemize}

We generalize all three cases by showing that, for any 
$\X \in \Mnr$, there exists a unitary diagonalization in the complex domain,
of which the real part equals $\X$:

\begin{equation}
    \X = \real(\Eemb \Wemb \Eemb^* )\,.
    \label{eqn:proj_eigen_dec}
\end{equation}
In other words, the unitary diagonalization is projected onto the real subspace.

\begin{thm}
\label{main_thm}

Suppose $\X \in \Mnr$ is a real square matrix. Then there exists
a normal matrix $\Z \in \Mnc$ such that $ \real(\Z) = \X$.

\end{thm}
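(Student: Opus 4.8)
The plan is to look for $\Z$ in the form $\Z = \X + iB$ with $B \in \Mnr$ a real matrix to be determined. Since $\X$ and $B$ are both real, $\real(\Z) = \X$ holds automatically for \emph{every} choice of $B$, so the whole content of the theorem is to exhibit one real $B$ for which $\Z$ is normal. Writing $\Z^* = \X\T - iB\T$ and expanding $\Z\Z^* - \Z^*\Z$, normality splits into its real (symmetric) and imaginary (antisymmetric) parts, giving the two real equations $\X\X\T + BB\T = \X\T\X + B\T B$ and $B\X\T - \X B\T = \X\T B - B\T \X$. Equivalently, and more usefully, $\Z$ is normal iff its Hermitian and skew-Hermitian parts commute; pulling out a factor $i$, this says that the two Hermitian matrices $S + iQ$ and $R - iA$ commute, where $S$ and $A$ are the symmetric and antisymmetric parts of $\X$, while $Q$ and $R$ are the antisymmetric and symmetric parts of the free matrix $B$. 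So the goal becomes: given real symmetric $S$ and real antisymmetric $A$, choose a real antisymmetric $Q$ and a real symmetric $R$ making $S+iQ$ and $R-iA$ simultaneously unitarily diagonalizable.

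First I would remove one source of difficulty by a real orthogonal change of basis. Commutators and the real/imaginary-part structure are preserved under conjugation by a real orthogonal $O$, since $O\T(M+iN)O = O\T M O + i\,O\T N O$ keeps $O\T M O$ symmetric and $O\T N O$ antisymmetric, and since $\real(O\Z O\T) = O\,\real(\Z)\,O\T$. Hence by the spectral theorem for symmetric matrices I may assume $S$ is diagonal. An essentially equivalent starting reduction is the \emph{real Schur decomposition} $\X = O\,T\,O\T$ with $O$ real orthogonal and $T$ real upper quasi-triangular: conjugation by $O$ again commutes with $\real(\cdot)$ and preserves normality, so it suffices to build a normal matrix whose real part is the quasi-triangular $T$. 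Either way the problem is reduced to a highly structured representative of the orthogonal-similarity class of $\X$.

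The construction I would then carry out by induction on $n$ (equivalently, on the number of diagonal Schur blocks), peeling off the last index: split $T$ (or the pair $S,A$) into an $(n-1)$-dimensional block for which a normal completion is assumed to exist, together with a border column, a border row, and a corner. The natural move is to reflect the super-diagonal border into the sub-diagonal position with a factor $-i$, making the off-diagonal part skew-Hermitian, and to leave the imaginary parts of the diagonal entries, that is, the imaginary parts of the eigenvalues of $\Z$, as free real parameters. After the block multiplication, normality of the enlarged matrix reduces to a single compatibility equation relating the border to the already-constructed normal sub-block, which the free diagonal parameters must be chosen to satisfy.

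The hard part is exactly this compatibility step. A naive global ansatz that merely reflects the strict upper triangle with $\pm i$ and shifts the diagonal already fails for $n \geq 3$: normality imposes far more constraints than the $n$ available diagonal parameters, and in the inductive step the border column is forced to be (almost) an eigenvector of a fixed matrix built from the sub-block, which is not automatic. Overcoming this is where the real content lies, and I expect to need the full freedom in $B$, both $Q$ and $R$ rather than just the diagonal, together with a careful ordering of the peeled-off eigenvalues, exploiting the diagonalized form of $S$ so that the coupling terms of the bilinear equation $[Q,R] = [S,A]$ (alongside the linear $[S,R] + [Q,A] = 0$) become tractable. Once the border equations are shown to be solvable at each step, undoing the orthogonal reduction returns the desired normal $\Z$ with $\real(\Z) = \X$.
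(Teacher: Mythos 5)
Your reduction of the problem is sound: writing $\Z = \X + iB$ with $B$ real, observing that $\real(\Z)=\X$ holds for any such $B$, and reducing normality of $\Z$ to the commutation of the two Hermitian matrices $S+iQ$ and $R-iA$ (equivalently, to your pair of real equations $[S,R]+[Q,A]=0$ and $[Q,R]=[S,A]$) is all correct. But the proposal stops short of being a proof: the entire difficulty is concentrated in the ``compatibility step'' of your induction, which you explicitly leave open (``Once the border equations are shown to be solvable at each step\ldots''). You yourself note that the naive triangular ansatz fails for $n\geq 3$ and that the inductive border conditions force near-eigenvector constraints that are ``not automatic,'' yet no choice of $Q$ and $R$ satisfying them is ever exhibited. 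As it stands, this is a strategy whose central lemma is missing, and the Schur reduction and induction machinery does nothing to supply it.

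The gap is closed by a choice your search never lands on: take $B = \X\T$, that is, $Q=-A$ and $R=S$ in your notation. Then your bilinear equation $[Q,R]=[S,A]$ reads $[-A,S]=[S,A]$, an identity, and your linear equation $[S,R]+[Q,A]=0$ reads $[S,S]+[-A,A]=0$, also an identity; equivalently, the two Hermitian matrices $S+iQ$ and $R-iA$ both become the \emph{same} matrix $S-iA$, so they commute trivially. Even more directly---and this is exactly the paper's proof---with $\Z = \X + i\X\T$ one has
\begin{equation}
\Z^* = \X\T - i\X = -i\bigl(\X + i\X\T\bigr) = -i\Z\,,
\notag
\end{equation}
hence $\Z\Z^* = -i\Z^2 = \Z^*\Z$. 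No orthogonal reduction, no real Schur form, and no induction are needed: once this $B$ is written down, the theorem is a two-line verification, and all the hard border equations you were bracing for simply never arise.
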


\begin{proof}
Let $\Z := \X + i\X\T$. Then
\begin{eqnarray*}
\Z^* = \X\T - i\X = -i(i\X\T + \X) = -i\Z\,,
\end{eqnarray*}
so that 
\begin{eqnarray*}
\Z\Z^* = \Z(-i\Z) = (-i\Z)\Z = \Z^*\Z\,.
\end{eqnarray*}
Therefore $Z$ is normal.
\end{proof}
Note that there also exists a normal matrix $\Z = \X\T + i\X$ such that $ \imag(\Z) = \X$.


Following Theorem~\ref{spectral_thm} and Theorem~\ref{main_thm}, any
real square matrix can be written as the real part of a complex diagonal
matrix through a unitary change of basis.

\begin{cor}
\label{cor_real_diag}

Suppose $\X \in \Mnr$ is a real square matrix. Then there exist $\Eemb,\Wemb \in \Mnc$, where $\Eemb$ is unitary, and $\Wemb$ is diagonal, such that $\X = \real(\Eemb \Wemb \Eemb^* )$.

\end{cor}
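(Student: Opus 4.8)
The plan is simply to chain the two results that immediately precede the statement, since the corollary is their direct composition. First I would invoke Theorem~\ref{main_thm} applied to the given real matrix $\X \in \Mnr$: this produces a normal matrix $\Z \in \Mnc$ whose real part is exactly $\X$, namely $\Z = \X + i\X\T$ as exhibited in that proof, so that $\real(\Z) = \X$. This is where all the substantive work sits, and it has already been carried out; the clever construction $\Z = \X + i\X\T$ is precisely what lets us bypass the fact that $\X$ itself need not be normal or even diagonalizable.

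Next I would apply the spectral theorem for normal matrices (Theorem~\ref{spectral_thm}). Because $\Z$ is normal, the theorem guarantees that $\Z$ is unitarily diagonalizable, i.e.\ that there exist $\Eemb, \Wemb \in \Mnc$ with $\Eemb$ unitary ($\Eemb\Eemb^* = \Eemb^*\Eemb = I$) and $\Wemb$ diagonal such that $\Z = \Eemb \Wemb \Eemb^*$. Taking real parts of both sides then yields $\X = \real(\Z) = \real(\Eemb \Wemb \Eemb^*)$, which is exactly the claimed decomposition, with $\Eemb$ unitary and $\Wemb$ diagonal as required.

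The point to verify — rather than a genuine obstacle — is only that the matrix supplied by Theorem~\ref{main_thm} is in fact normal, so that the hypothesis of the spectral theorem is met; this is guaranteed by Theorem~\ref{main_thm} itself. Once both prior results are in hand the corollary follows with no further computation whatsoever, so I expect no difficulty beyond invoking the two theorems in the correct order and reading off the real parts. In short, all the mathematical content is already contained in Theorem~\ref{main_thm} and Theorem~\ref{spectral_thm}, and the proof amounts to composing them.
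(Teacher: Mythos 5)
Your proof is correct and follows exactly the paper's own argument: apply Theorem~\ref{main_thm} to obtain a normal matrix $\Z$ with $\real(\Z) = \X$, then invoke the spectral theorem (Theorem~\ref{spectral_thm}) to unitarily diagonalize $\Z$ and take real parts. No gaps; your version simply spells out the final step of reading off $\X = \real(\Eemb \Wemb \Eemb^*)$ a bit more explicitly than the paper does.
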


\begin{proof}
From Theorem \ref{main_thm}, we can write $\X =\real(\Z) $, where $\Z$ is a normal matrix,
and from Theorem \ref{spectral_thm}, $\Z$ is unitarily diagonalizable.
\end{proof}

Applied to the knowledge graph completion setting, the rows of $E$ here are vectorial representations of the entities corresponding 
to rows and columns of the relation score matrix $\X$. 
The score for the relation holding true between entities $s$ and $o$ is hence
\begin{equation}
x_{so} = \real(e_s\T W \bar{e}_o)
\end{equation}
where $e_s, e_o\in\C^n$ and $W \in\C^{n \times n}$ is diagonal.
For a given entity,
its subject embedding vector is the complex conjugate of its object embedding vector.

To illustrate this difference of expressiveness with respect to real-valued embeddings,
let us consider two complex embeddings $e_s, e_o \in \C$ of dimension 1,
with arbitrary values: $e_s = 1 - 2i$, and $e_o = -3 + i$; as well as
their real-valued, twice-bigger counterparts: $e'_s = \icol{1\\-2}\in \R^2$ and
$e'_o = \icol{-3\\1}\in \R^2$.
In the real-valued case, that corresponds to the \textsc{DistMult} model \cite{Yang2015},
the score is $x_{so} = e\pT_s W' e'_o$.
Figure \ref{fig:complex_vs_real_decomp} represents the heatmaps of the scores
$x_{so}$ and $x_{os}$, as a function of $W \in \C$ in the complex-valued case, 
and as a function of $W' \in \R^2$ diagonal in the real-valued case.
In the real-valued case, that is symmetric in the subject and object entities, 
the scores $x_{so}$ and $x_{os}$ are equal for any value of $W' \in \R^2$ diagonal.
Whereas in the complex-valued case, the variation of $W \in \C$
allows to score $x_{so}$ and $x_{os}$ with any desired pair of values.

\begin{figure}[ht]
	\centering
	\includegraphics[width=0.49\linewidth]{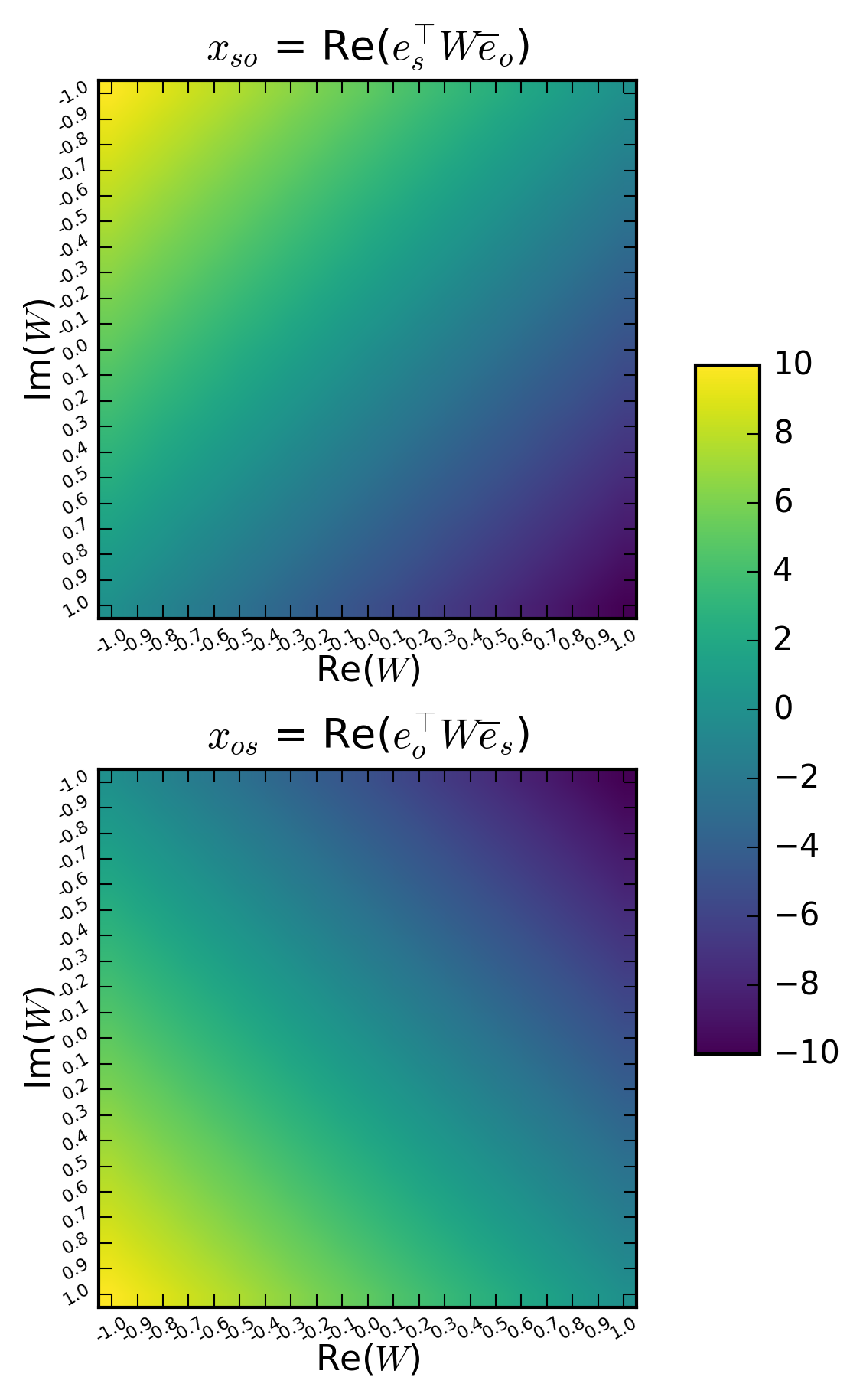}
	\includegraphics[width=0.49\linewidth]{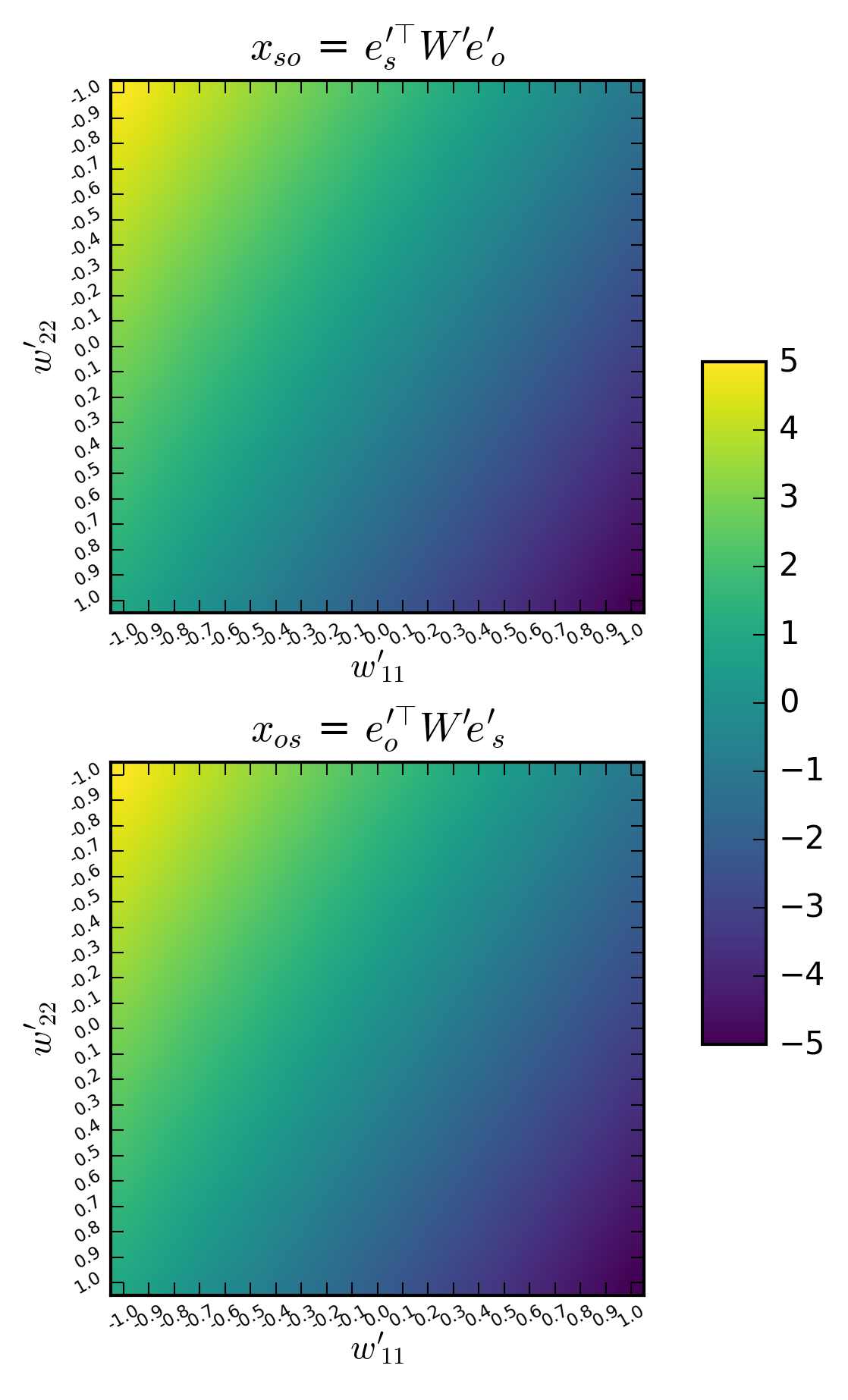}
    \caption{Left: Scores $x_{so}= \real(e_s\T W \bar{e}_o)$ (top) and $x_{os}= \real(e_o\T W \overline{e}_s)$ (bottom) for the proposed complex-valued decomposition, plotted as a function of $W \in \C$, for fixed entity embeddings $e_s = 1 - 2i$, and $e_o = -3 + i$. Right: Scores $x_{so}= e\pT_s W' e'_o$ (top) and $x_{os}= e\pT_o W' e'_s$ (bottom) for the corresponding real-valued decomposition with the same number of free real-valued parameters (\textit{i.e.} in twice the dimension), plotted as a function of $W' \in \R^2$ diagonal, for fixed entity embeddings $e'_s = \usebox{\vecs}$ and $e'_o = \usebox{\veco}$. By varying $W \in \C$, the proposed complex-valued decomposition can attribute any pair of scores to $x_{so}$ and $x_{os}$, whereas $x_{so} = x_{os}$ for all $W' \in \R^2$ with the real-valued decomposition.}
	\label{fig:complex_vs_real_decomp}
\end{figure}

This decomposition however is non-unique, a simple example of this non-uniqueness 
is obtained by adding a purely imaginary constant to the eigenvalues.
Let $X \in \Mnr$, and $X = \real(EWE^*)$ where $E$ is unitary, $W$ is diagonal. Then for any real constant $c \in \R$ we have:
\begin{eqnarray*}
X &=& \real(E(W + icI)E^*)\notag\\
&=& \real(EWE^* + ic EIE^*)\notag\\
&=& \real(EWE^* + icI)\notag\\
&=& \real(EWE^*)\,.\notag\\
\end{eqnarray*}

In general, there are many other possible couples of matrices $E$ and $W$
that preserve the real part of the decomposition.
In practice however this is no synonym of low generalization abilities, 
as many effective matrix and tensor decomposition methods 
used in machine learning lead to non-unique solutions 
\cite{paatero1994positive,Nickel2011}.
In this case also, the learned representations prove useful as 
shown in the experimental section.

\subsection{Low-Rank Decomposition}

Addressing knowledge graph completion with data-driven approaches 
assumes that there is a sufficient regularity in
the observed data to generalize to unobserved facts. When formulated as 
a matrix completion problem, as it is the case in this section, one way
of implementing this hypothesis is to make the assumption
that the matrix has a low rank or approximately low rank.
We first discuss the rank of the proposed decomposition, and then introduce 
the sign-rank and extend the bound developed on the rank to the sign-rank.

\subsubsection{Rank Upper Bound}

First, we recall one definition of the rank of a matrix \cite{horn2012matrix}.

\begin{defn}
The rank of an $m$-by-$n$ complex matrix $\rrank(\X)=\rrank(\X\T)=k$, if 
$\X$ has exactly $k$ linearly independent columns.
\end{defn}

Also note that if $\X$ is diagonalizable so that $\X = \Eemb \Wemb \Eemb^{-1}$ with $\rrank(\X)=k$, then $\Wemb$ has $k$ non-zero diagonal entries for some diagonal $\Wemb$ and some invertible matrix $\Eemb$.
From this it is easy to derive a known additive property of the rank:
\begin{equation}
    \label{eq:rank_add}
    \rrank(B+C) \leq \rrank(B) + \rrank(C)
\end{equation}
where $B,C \in \C^{m \times n}$.

We now show that any rank $k$ real square matrix can be reconstructed 
from a $2k$-dimensional unitary diagonalization.

\begin{cor}
\label{cor_real_rank}
Suppose $\X \in \Mnr$ and $rank(\X) = k$. Then there exist 
$\Eemb \in \C^{n \times 2k}$ such that the columns of $\Eemb$ form an orthonormal basis of $\C^{2k}$,
$\Wemb \in \C^{2k \times 2k}$ is diagonal, and $\X = \real(\Eemb \Wemb \Eemb^* )$.
\end{cor}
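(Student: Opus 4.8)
The plan is to reuse the normal matrix from Theorem~\ref{main_thm} and to bound the number of its nonzero eigenvalues by $2k$ using the additive rank property~(\ref{eq:rank_add}). First I would set $\Z := \X + i\X\T$. By Theorem~\ref{main_thm}, $\Z$ is normal and $\real(\Z) = \X$. Since $\rrank(\X\T) = \rrank(\X) = k$ and multiplication by the nonzero scalar $i$ does not change rank, inequality~(\ref{eq:rank_add}) gives $\rrank(\Z) = \rrank(\X + i\X\T) \le \rrank(\X) + \rrank(i\X\T) = 2k$.

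Next I would diagonalize. Because $\Z$ is normal, Theorem~\ref{spectral_thm} yields a unitary $\Eemb' \in \Mnc$ and a diagonal $\Wemb' \in \Mnc$ with $\Z = \Eemb' \Wemb' \Eemb'^*$. As noted above for diagonalizable matrices, the rank of $\Z$ equals the number of its nonzero eigenvalues, so at most $2k$ of the diagonal entries of $\Wemb'$ are nonzero. Discarding the columns of $\Eemb'$ associated with zero eigenvalues (they contribute nothing to the product $\Eemb'\Wemb'\Eemb'^*$) leaves an $n \times r$ matrix with $r \le 2k$ orthonormal columns---being a subset of the columns of the unitary $\Eemb'$---together with the corresponding $r \times r$ diagonal block of eigenvalues, and these still reconstruct $\Z$.

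Finally I would pad up to exactly $2k$ columns: append $2k - r$ further unit vectors that are orthogonal to one another and to the retained columns, assigning them zero eigenvalues. This does not alter the product, so we obtain $\Eemb \in \C^{n \times 2k}$ with orthonormal columns and diagonal $\Wemb \in \C^{2k \times 2k}$ satisfying $\Z = \Eemb \Wemb \Eemb^*$, whence $\X = \real(\Z) = \real(\Eemb \Wemb \Eemb^*)$ as required.

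I expect the main substance to be the rank bound $\rrank(\Z) \le 2k$ together with the identification of rank with the count of nonzero eigenvalues for the diagonalizable (indeed normal) matrix $\Z$; the remaining steps are bookkeeping. The one subtlety to watch is the padding step, which presumes $2k \le n$ so that $2k$ mutually orthonormal columns can exist in $\C^n$; when $2k > n$ one simply keeps all available eigenvectors, and the claimed dimension $2k$ should be read as an upper bound on the size of the diagonalization.
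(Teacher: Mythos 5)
Your proof is correct and takes essentially the same route as the paper: the same matrix $\Z := \X + i\X\T$, the same rank bound $\rrank(\Z) \le 2k$ via subadditivity, and the same appeal to normality and the spectral theorem, with your truncation-and-padding step simply making explicit the bookkeeping that the paper's own proof leaves implicit. Your closing caveat about the case $2k > n$ is also consistent with the paper, which notes immediately after the proof that this upper bound is only relevant when $\rrank(\X) < \frac{n}{2}$.
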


\begin{proof}
Consider the complex square matrix $\Z := \X + i\X\T$. We have $\rrank(i\X\T) = \rrank(\X\T) = \rrank(\X) = k$.

From Equation~(\ref{eq:rank_add}), 
$\rrank(\Z) \leq \rrank(\X) + \rrank(i\X\T) = 2k$.


The proof of Theorem \ref{main_thm} shows that $\Z$ is normal. Thus $\Z = \Eemb \Wemb \Eemb^* $ with $\Eemb \in \C^{n \times 2k}$, $\Wemb \in \C^{2k \times 2k}$ where 
the columns of $\Eemb$ form an orthonormal basis of $\C^{2k}$, and $\Wemb$ is diagonal.
\end{proof}

Since $\Eemb$ is not necessarily square, we replace the unitary requirement of Corollary \ref{cor_real_diag}
by the requirement that its columns form an orthonormal basis of its smallest dimension, $2k$.

Also, given that such decomposition always exists in dimension $n$ (Theorem \ref{main_thm}),
this upper bound is not relevant when $\rrank(\X) \geq \frac{n}{2}$.

\subsubsection{Sign-Rank Upper Bound}
Since we encode the truth values of each fact with $\pm 1$, we deal with square \emph{sign matrices}:
$Y \in \{-1,1\}^{n \times n}$. Sign matrices have an alternative rank definition, the \emph{sign-rank}.

\begin{defn}
The sign-rank $\srank(Y)$  of an $m$-by-$n$ sign matrix $Y,$ is the rank of
the $m$-by-$n$ real matrix of least rank that has the same sign-pattern as $Y,$ so that
$$\srank(Y) := \min_{\X\in \R^{m \times n}} \{\rrank(\X)\,|\, \sign(\X) = Y \}\,,$$
where $\sign(\X)_{ij} = \sign(x_{ij})$. 
\end{defn}

We define the \emph{sign function} of $c \in \R$ as 
\begin{equation}
    \sign(c) = \left\{
    \begin{array}{ll}
        1 &\mbox{if } c \geq 0\\
        -1 & \mbox{otherwise}\notag
    \end{array}
\right.
\end{equation}
where the value $c=0$ is here arbitrarily assigned to $1$ to allow zero entries in $X$,
conversely to the stricter usual definition of the sign-rank.

To make generalization possible, we hypothesize that the true matrix $Y$
has a low sign-rank, and thus can be reconstructed by the sign of a low-rank score
matrix $X$. 
The low sign-rank assumption is theoretically justified by the fact that the sign-rank is a natural complexity measure of sign matrices \mbox{\cite{Linial2007}} and is linked to learnability \cite{alon2016sign} and empirically confirmed by the wide success of factorization models~\cite{nickel_2016_review}. 

Using Corollary \ref{cor_real_rank}, we can now show that any square sign matrix of sign-rank $k$ can be reconstructed from a rank $2k$ unitary diagonalization.

\begin{cor}
\label{cor_sign_rank}
Suppose $Y \in \{-1,1\}^{n \times n}$, $\srank(Y)=k$. Then there exists 
$\Eemb \in \C^{n \times 2k}$, $\Wemb \in \C^{2k \times 2k}$ where 
the columns of $\Eemb$ form an orthonormal basis of $\C^{2k}$, and $\Wemb$ is diagonal,
such that $Y = \sign(\real(\Eemb \Wemb \Eemb^* ))$.
\end{cor}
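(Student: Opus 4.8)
The plan is to chain the definition of sign-rank directly with Corollary~\ref{cor_real_rank}, so that essentially no new work is required beyond unwinding definitions. The statement asks for a complex diagonalization whose real part recovers $Y$ after applying $\sign(\cdot)$, and the hard existence content---that a rank-$k$ real square matrix is the real part of a $2k$-dimensional unitary diagonalization---has already been established.

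First I would invoke the definition of the sign-rank. Since $\srank(Y)=k$, the minimization
\begin{equation*}
\srank(Y) = \min_{\X\in \R^{n \times n}} \{\rrank(\X)\,|\, \sign(\X) = Y \}
\end{equation*}
is attained, so there exists a real square matrix $\X \in \Mnr$ with $\rrank(\X)=k$ and $\sign(\X)=Y$. This is the only place where the sign-rank hypothesis enters: it hands us a genuine low-rank real witness $\X$ sharing the sign pattern of $Y$.

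Next I would apply Corollary~\ref{cor_real_rank} to this $\X$. Because $\X$ is real, square, and of rank exactly $k$, the corollary furnishes matrices $\Eemb \in \C^{n \times 2k}$ whose columns form an orthonormal basis of $\C^{2k}$, together with a diagonal $\Wemb \in \C^{2k \times 2k}$, satisfying $\X = \real(\Eemb \Wemb \Eemb^*)$. These $\Eemb$ and $\Wemb$ are precisely the objects the corollary claims to exist.

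Finally I would combine the two facts: applying $\sign(\cdot)$ to both sides of $\X = \real(\Eemb \Wemb \Eemb^*)$ gives $\sign(\real(\Eemb \Wemb \Eemb^*)) = \sign(\X) = Y$, which is the desired conclusion. I do not expect any genuine obstacle here, since all the analytic difficulty was absorbed into Corollary~\ref{cor_real_rank}; the only point worth a moment's care is that the sign-rank minimum is actually achieved by some $\X$ (rather than merely approached), so that we have a concrete rank-$k$ matrix to feed into the earlier corollary rather than a limiting sequence.
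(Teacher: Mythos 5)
Your proposal is correct and follows essentially the same route as the paper's own proof: extract a rank-$k$ real witness $\X$ with $\sign(\X)=Y$ from the definition of sign-rank, then apply Corollary~\ref{cor_real_rank} to obtain the $2k$-dimensional decomposition. Your closing step of explicitly applying $\sign(\cdot)$ to both sides is left implicit in the paper, but the argument is identical in substance.
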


\begin{proof}
By definition, if $\srank(Y)=k$, there exists a real square matrix $\X$ such that
$\rrank(\X)=k$ and $\sign(\X)=Y$.
From Corollary \ref{cor_real_rank}, $\X = \real(\Eemb \Wemb \Eemb^* )$ where $\Eemb \in \C^{n \times 2k}$, $\Wemb \in \C^{2k \times 2k}$ where 
the columns of $\Eemb$ form an orthonormal basis of $\C^{2k}$, and $\Wemb$ is diagonal.
\end{proof}

Previous attempts to approximate the sign-rank in relational learning 
did not use complex numbers. 
They showed the existence of compact factorizations under 
conditions on the sign matrix~\cite{nickel2014reducing}, or only in specific cases~\cite{bouchard2015approximate}.
In contrast, our results show that if a square sign matrix
has sign-rank $k$, then it can be exactly decomposed through a $2k$-dimensional
unitary diagonalization.


Although we can only show the existence of a complex decomposition of rank $2k$ for a matrix
with sign-rank $k$, the sign rank of $Y$ is often \emph{much} lower 
than the rank of $Y$, as we do not know any matrix 
$Y \in \{-1,1\}^{n \times n}$ for
which $\srank(Y) > \sqrt{n}$  \cite{alon2016sign}. 
For example, the  $n \times n$ identity matrix
has rank $n$, but its sign-rank is only 3! By swapping the columns $2j$ and $2j-1$ 
for $j$ in $1,\ldots,\frac{n}{2}$, the identity matrix corresponds to the
relation \texttt{marriedTo}, a relation known to be hard to
factorize over the reals \cite{nickel2014reducing}, since the rank
is invariant by row/column permutations. Yet our model can express it 
at most in rank 6, for any $n$.



Hence, by enforcing a low-rank $K \ll n$ on $E W E^*$, individual relation scores $x_{so}= \real(e_s\transp W \bar{e}_o)$ between entities $s$ and $o$ can be efficiently predicted, as $e_s, e_o\in\C^K$ and $W \in\C^{K \times K}$ is diagonal.

Finding the $K$ that matches the sign-rank of $Y$
corresponds to finding the smallest $K$ that brings the 0--1 loss 
on $X$ to $0$,
as link prediction can be seen as binary classification of the facts.
In practice, and as classically done in machine learning 
to avoid this NP-hard problem,
we use a continuous surrogate of the 0--1 loss, in this case the logistic
loss as described in Section \ref{sec:algo}, and validate
models on different values of $K$, as described in Section
\ref{sec:expe}. 

\subsubsection{Rank Bound Discussion}

Corollaries \ref{cor_real_rank} and \ref{cor_sign_rank} use the aforementioned
subadditive property of the rank to derive the $2k$ upper bound.
Let us give an example for which this bound is strictly
greater than $k$.

Consider the following $2$-by-$2$ sign matrix:
\[
Y=
  \begin{bmatrix}
    -1 & -1\\
     \phantom{-}1 & \phantom{-}1\\
  \end{bmatrix}\,.
\]

Not only is this matrix not normal, but one can also easily check that there is no \emph{real} normal $2$-by-$2$ matrix that has the same sign-pattern as $Y$.
Clearly, $Y$ is a rank $1$ matrix since its columns are linearly dependent,
hence its sign-rank is also $1$. From Corollary \ref{cor_sign_rank}, 
we know that there is a normal matrix whose real part has the same sign-pattern as $Y$,
and whose rank is at most $2$. 


However, there is no rank $1$ unitary diagonalization of which the real part equals $Y$.
Otherwise we could find a 2-by-2 complex matrix $\Z$
such that $\real(z_{11}) < 0$ and $\real(z_{22}) > 0$, 
where $z_{11} = e_1 w \bar{e}_1 = w |e_1|^2$, $z_{22} = e_2 w \bar{e}_2 
= w |e_2|^2$,
$e \in \C^2, w \in \C$. This is obviously unsatisfiable.
This example generalizes to any $n$-by-$n$ square sign matrix that only
has $-1$ on its first row and is hence rank 1, the same argument holds
considering $\real(z_{11}) < 0$ and $\real(z_{nn}) > 0$.

This example shows that the upper bound
on the rank of the unitary diagonalization showed in Corollaries \ref{cor_real_rank} 
and \ref{cor_sign_rank} can be strictly greater than $k$, the rank or sign-rank, 
of the decomposed matrix. However, there might be other examples for which the
addition of an imaginary part could---additionally to making the matrix normal---create 
some linear dependence between the rows/columns and thus decrease the rank of the matrix,
up to a factor of 2.\\

\textbf{We summarize this section in three points:}

\begin{enumerate}
    \item The proposed factorization encompasses all possible score matrices $X$ for a single binary relation.
    \item By construction, the factorization is well suited to represent both symmetric and antisymmetric relations.
    \item Relation patterns can be efficiently approximated
with a low-rank factorization using complex-valued embeddings.
\end{enumerate}

\section{Extension to Multi-Relational Data}
\label{sec:tens_case}

Let us now extend the previous discussion to models with multiple relations.
Let $\RelationSpace$ be the set of relations, with $|\RelationSpace|=m$.
We shall now write $\ScoreTensor \in \R^{m \times n \times n}$ for the score tensor,
$\Relation_r \in \R^{n \times n}$ for the score matrix of the relation $r  \in \RelationSpace$,
and $\ObsTensor \in \{-1,1\}^{m \times n \times n}$ for the partially-observed sign tensor.


Given one relation $r \in \RelationSpace$ and two entities $s,o$ $ \in \EntitySpace$, the probability that the fact \emph{r(s,o)} is true given by:
\begin{equation}
    \proba(y_{rso}=1) = \sigma(x_{rso})=\sigma(\phi(r,s,o;\Theta))
    \label{observation-model}
\end{equation}
where $\phi$ is the scoring function of the model considered and $\Theta$ denotes the model parameters.
We denote the set of all possible facts (or triples) for a knowledge graph by $\TripleSpace = \RelationSpace \times \EntitySpace \times \EntitySpace$.
While the tensor $\ScoreTensor$ as a whole is unknown, we 
assume that we observe a set of true and false triples 
$\OmegaSpace = \{((r,s,o), y_{rso}) \,|\, (r,s,o) \in \TripleSpace_{\OmegaSpace}\}$ 
where $y_{rso} \in \{-1,1\}$
and $\TripleSpace_{\OmegaSpace}\subseteq\TripleSpace$ is the set of observed triples. 
The goal is to find the probabilities of entries 
$y_{r's'o'}$ for a set of targeted unobserved 
triples $\{(r',s',o')\in \TripleSpace \setminus \TripleSpace_{\OmegaSpace}\}$.

Depending on the scoring function $\phi(r,s,o;\Theta)$ used to model the
score tensor $\ScoreTensor$, we obtain different models. Examples of scoring functions are given in Table~\ref{tab:scoring}. 

\subsection{Complex Factorization Extension to Tensors}

The single-relation model is extended by jointly factorizing all 
the square matrices of scores into a $\mathrm{3^{rd}}$-order
tensor $\ScoreTensor \in \R^{m \times n \times n}$, with a different diagonal
matrix $W_r\in \C^{K \times K}$ for each relation $r$, and by sharing the entity embeddings $E \in \C^{n \times K}$ 
across all relations:
\begin{eqnarray}
    \phi(r,s,o;\Theta) &=& \real(\eemb_s\T W_{r} \bar\eemb_o)\notag\\
    &=& \real(\sum_{k=1}^K w_{rk} \eemb_{sk} \bar\eemb_{ok})\notag\\
    \label{eqn:complex-dot}
    &=& \real(\left<w_{r}, \eemb_s, \bar\eemb_o\right>)
    \label{eqn:complex-dot1}
\end{eqnarray}
where $K$ is the rank hyperparameter,
$e_s, e_o \in \C^K$ are the rows in $E$ corresponding to the entities $s$ and $o$, 
$w_r = \mathrm{diag}(W_r) \in \C^K$ is a complex vector,
and $\left<a,b,c\right> := \sum_k a_kb_kc_k$ is the component-wise multilinear 
dot product\footnote{This is not the Hermitian extension of the multilinear 
dot product as there appears to be no standard definition of the Hermitian multilinear product in the linear algebra literature.}.
For this scoring function, the set of parameters $\Theta$ is $\{e_i,w_r \in \C^K,
i \in \setent, r \in \setrel \}$.
This resembles the real part of a complex matrix decomposition as in the single-relation case discussed above. However, we now have a different vector of eigenvalues for every relation.
Expanding the real part of this product gives:
\begin{eqnarray}
    \real(\left<w_{r}, \eemb_s, \bar\eemb_o\right>) &=& \quad \left<\real(w_r),\real(e_s), \real(e_o)\right>\notag\\
    &&+ \left<\real(w_r),\imag(e_s), \imag(e_o)\right> \notag\\
    &&+ \left<\imag(w_r), \real(e_s),\imag(e_o)\right> \notag\\
    &&- \left<\imag(w_r),\imag(e_s),\real(e_o)\right>\,.
    \label{eqn:full_model}
\end{eqnarray}

These equations provide two interesting views of the model:
\begin{itemize}
    \item \emph{Changing the representation}: Equation~(\ref{eqn:complex-dot1}) would correspond to \textsc{DistMult} with real embeddings (see Table \ref{tab:scoring}), but handles asymmetry thanks to the complex conjugate of the object-entity embedding.
   
    \item  \emph{Changing the scoring function}: Equation~(\ref{eqn:full_model}) only involves real vectors corresponding to the real and imaginary parts of the embeddings and relations.%
\end{itemize}

By separating the real and imaginary parts of the relation embedding $\wemb_r$
as shown in Equation~(\ref{eqn:full_model}),
it is apparent that these parts naturally act as weights on each latent dimension: 
$\real(w_r)$ over the real part of $\left< e_o, e_s \right>$ which is symmetric, 
and $\imag(w)$ over the imaginary part
of $\left< e_o, e_s \right>$ which is antisymmetric. 

Indeed, the decomposition of each score matrix $\Relation_r$ 
for each $r \in \RelationSpace$ can be written as
the sum of a symmetric matrix and an antisymmetric matrix.
To see this, let us rewrite the decomposition of each
score matrix $\Relation_r$ in matrix notation.
We write the real part of matrices with primes $\Eemb' = \real(\Eemb)$ and
imaginary parts with double primes $\Eemb'' = \imag(\Eemb)$:
\begin{eqnarray}
    \Relation_r &=& \real( \Eemb \Wemb_r \Eemb^* )\notag\\
                &=& \real( (\Eemb'+ i \Eemb'') (\Wemb'_r+ i \Wemb''_r) (\Eemb'- i \Eemb'')\transp )\notag\\
                &=& (\Eemb' \Wemb_r' \Eemb'^\transp + \Eemb'' \Wemb_r' \Eemb''^\transp) 
                + (\Eemb' \Wemb_r'' \Eemb''^\transp - \Eemb'' \Wemb_r'' \Eemb'^\transp)\,.
    \label{eq:as_symm_antisym_sum}
\end{eqnarray}
It is trivial to check that the matrix $\Eemb' \Wemb_r' \Eemb'^\transp + \Eemb'' \Wemb_r' \Eemb''^\transp$ is symmetric and that the matrix $\Eemb' \Wemb_r'' \Eemb''^\transp - \Eemb'' \Wemb_r'' \Eemb'^\transp$ is antisymmetric.
Hence this model is well suited to model jointly symmetric and antisymmetric relations between pairs of entities, while still using the same entity representations for subjects and objects.
When learning, it simply needs to 
collapse $\Wemb''_r = \imag(\Wemb_r)$ to zero for symmetric relations $r \in \RelationSpace$, and $\Wemb'_r = \real(\Wemb_r)$
to zero for antisymmetric relations $r \in \RelationSpace$,
as $\Relation_r$
is indeed symmetric when $\Wemb_r$ is purely real, 
and antisymmetric when $\Wemb_r$ is purely imaginary.

From a geometrical point of view, 
each relation embedding $w_r$ is an anisotropic
scaling of the basis defined by the entity embeddings $E$, 
followed by a projection onto the real subspace.

\subsection{Existence of the Tensor Factorization}

Let us first discuss the existence of the multi-relational
model where the rank of the decomposition $K \leq n$,
which relates to simultaneous unitary decomposition.

\begin{defn}
    A family of matrices $X_1,\ldots,X_m \in \Mnc$ is simultaneously unitarily diagonalizable,
    if there is a single unitary matrix $E \in \Mnc$, such that $X_i = EW_iE^*$  for all $i$ in $1,\ldots,m$, where $W_i \in \Mnc$ are diagonal.
\end{defn}

\begin{defn}
    A family of normal matrices $X_1,\ldots,X_m \in \Mnc$
    is a commuting family of normal matrices, if 
    $X_i X_j^* = X_i^* X_j$, for all $i,j$ in $1,\ldots,m$.
\end{defn}

\begin{thm}[see \citet{horn2012matrix}]
    \label{thm:commuting}
    Suppose $\mathcal{F}$ is  the family of matrices $X_1,$ $\ldots$ $,X_m \in \Mnc$.
    Then $\mathcal{F}$ is a commuting family of normal matrices if and only if
    $\mathcal{F}$ is simultaneously unitarily diagonalizable.
\end{thm}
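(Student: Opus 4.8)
The plan is to prove the two implications separately, with essentially all the work in the forward direction. For the easy direction, suppose the family is simultaneously unitarily diagonalizable, so $X_i = E W_i E^*$ for a single unitary $E$ and diagonal matrices $W_i$. Since diagonal matrices are normal and unitary conjugation preserves normality, each $X_i$ is normal; and since diagonal matrices commute, $X_i X_j = E W_i W_j E^* = E W_j W_i E^* = X_j X_i$, so the family commutes. This settles the $\Leftarrow$ implication.

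For the forward direction I would reduce to the single-matrix spectral theorem (Theorem \ref{spectral_thm}) by peeling off simultaneous invariant subspaces. First pick $X_1$; by Theorem \ref{spectral_thm} it is unitarily diagonalizable, so $\C^n$ splits as an orthogonal direct sum $\bigoplus_\lambda V_\lambda$ of its eigenspaces $V_\lambda = \ker(X_1 - \lambda I)$. Because each $X_j$ commutes with $X_1$, every $V_\lambda$ is $X_j$-invariant (if $X_1 v = \lambda v$ then $X_1 (X_j v) = X_j X_1 v = \lambda X_j v$). The key supporting fact is that an invariant subspace of a \emph{normal} matrix is automatically reducing, so $V_\lambda^\perp$ is $X_j$-invariant as well and the restriction $X_j|_{V_\lambda}$ is again normal. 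The restricted family on each $V_\lambda$ is then a smaller commuting family of normal matrices, so by induction on dimension it is simultaneously unitarily diagonalizable; gluing the resulting orthonormal eigenbases across the mutually orthogonal summands $V_\lambda$ yields a single orthonormal basis of $\C^n$ diagonalizing every $X_j$, whose vectors form the columns of the desired common unitary $E$.

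An alternative route I would keep in reserve is simultaneous triangularization: a commuting family always has a common eigenvector (take an eigenspace of $X_1$ and induct), so by induction on $n$ there is a unitary $U$ making every $U^* X_i U$ upper triangular; since normality is preserved under this conjugation and an upper-triangular normal matrix is forced to be diagonal (compare the diagonal entries of $T T^*$ and $T^* T$), all the $U^* X_i U$ are in fact diagonal. The main obstacle in either route is the forward direction's structural bookkeeping: verifying that the eigenspaces of $X_1$ are simultaneously invariant and reducing for the whole family so that the restrictions stay normal, and organizing the induction so that the bases on the different summands assemble into one unitary. The two technical lemmas this rests on, namely that invariant subspaces of normal matrices are reducing and that normal triangular matrices are diagonal, are what I would establish or cite first.
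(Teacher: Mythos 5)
There is no in-paper proof to compare against: the paper imports this result verbatim from \citet{horn2012matrix} as background, and immediately argues that its hypothesis (a commuting family) is too strong to apply to the proposed factorization, so the only fair comparison is with the textbook proof behind the citation. Measured against that, your reserve route essentially \emph{is} the cited proof: a common eigenvector for the commuting family, induction on $n$ to obtain a single unitary $U$ with every $U^*X_iU$ upper triangular, then the lemma that a triangular normal matrix is diagonal. Your primary route --- decompose $\C^n$ into the eigenspaces of $X_1$, check that they are invariant for every $X_j$, invoke the fact that invariant subspaces of normal matrices are reducing so that the restrictions stay normal, and induct --- is a correct and equally standard alternative; it trades the triangularization machinery for the reducing-subspace lemma and keeps the entire argument inside orthogonal decompositions. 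Your easy direction is handled correctly.

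Two caveats. First, a concrete (though easily repaired) defect in your primary route: the induction on dimension breaks down when $X_1$ is a scalar matrix, because then there is exactly one eigenspace, $V_\lambda = \C^n$, and your ``smaller'' restricted family is the original family on the original space. Fix it by inducting on the number of matrices as well --- a scalar member is diagonal in every orthonormal basis and can simply be discarded --- or by splitting along any non-scalar member of the family and treating the all-scalar case as trivial. Second, you have (sensibly) proved the statement for the standard notion of commuting family, $X_iX_j = X_jX_i$. The paper's own definition, $X_iX_j^* = X_i^*X_j$, is a misprint: under it the theorem is false in the forward direction --- take the already-diagonal normal family $X_1 = I$ and $X_2 = \mathrm{diag}(i,0,\ldots,0)$, which is simultaneously unitarily diagonalizable yet has $X_1X_2^* \neq X_1^*X_2$ --- whereas the correct variant for normal matrices, $X_iX_j^* = X_j^*X_i$, is equivalent to plain commutation by Fuglede's theorem. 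Your reading matches what \citet{horn2012matrix} actually assert, which is what the paper intends.
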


To apply Theorem \ref{thm:commuting} to the proposed factorization, 
we would have to make the hypothesis
that the relation score matrices $\Relation_r$ are a commuting family, which is too strong
a hypothesis. Actually, the model is slightly different since we take only the real part
of the tensor factorization. In the single-relation case, taking only
the real part of the decomposition rids us of the normality requirement 
of Theorem \ref{spectral_thm} for the decomposition to exist, as shown in Theorem \ref{main_thm}.

In the multiple-relation case, it is an open question whether 
taking the real part of the simultaneous unitary diagonalization will enable 
us to decompose families of arbitrary real square 
matrices---that is with a single unitary matrix $E$ that has \emph{at most} $n$ columns.
Though it seems unlikely, we could not find a counter-example yet.

However, by letting the rank of the tensor factorization $K$ to be greater than $n$,
we can show that the proposed tensor decomposition exists for families of arbitrary real square 
matrices, by simply concatenating the decomposition of Theorem \ref{main_thm} 
of each real square matrix $X_i$.

\begin{thm}
    \label{thm:main_thm_tensors}
    Suppose $X_1,\ldots,X_m \in \Mnr$. Then there exists $E \in \C^{n \times nm}$ and 
    $W_i \in \C^{nm \times nm}$ are diagonal, 
    such that $X_i = \real(E W_i E^*)$ for all $i$ in $1,\ldots,m$.
\end{thm}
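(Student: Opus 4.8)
The plan is to build the single wide matrix $E$ by horizontally stacking the eigenvector matrices coming from the single-relation construction, and to encode each relation's eigenvalues in its own block of an otherwise-zero diagonal matrix $W_i$. Concretely, I would first apply Theorem~\ref{main_thm} to each $X_i$ separately: setting $\Z_i := X_i + i X_i\T$ produces, for every $i$, a normal matrix with $\real(\Z_i) = X_i$. By the spectral theorem for normal matrices (Theorem~\ref{spectral_thm}), each $\Z_i$ is unitarily diagonalizable, so there exist a unitary $E_i \in \Mnc$ and a diagonal $\Lambda_i \in \Mnc$ with $\Z_i = E_i \Lambda_i E_i^*$.

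Next I would define the concatenated matrix $E := [\,E_1 \mid E_2 \mid \cdots \mid E_m\,] \in \C^{n \times nm}$, and, for each $i$, the block-diagonal matrix $W_i \in \C^{nm \times nm}$ whose $i$-th $n \times n$ diagonal block equals $\Lambda_i$ and whose remaining diagonal blocks are zero. Since $\Lambda_i$ is diagonal and all other blocks vanish, $W_i$ is itself diagonal, as the statement requires.

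The key computation is then to expand $E W_i E^*$ using the block structure. Writing $E^*$ as the vertical stack of the $E_j^*$ and multiplying out, the product reduces to $E W_i E^* = \sum_{j=1}^m E_j (W_i)_{jj} E_j^*$, where $(W_i)_{jj}$ denotes the $j$-th diagonal block. Because only the $i$-th block of $W_i$ is nonzero, all terms with $j \neq i$ drop out and we are left with $E W_i E^* = E_i \Lambda_i E_i^* = \Z_i$. Taking real parts and invoking $\real(\Z_i) = X_i$ finishes the argument for every $i$ in $1,\ldots,m$.

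I do not expect a genuine obstacle here, since the construction is a direct concatenation; the subtlety to flag is only that $E$ is $n \times nm$ and hence \emph{not} square, so its columns are no longer orthonormal across blocks---the unitarity enjoyed in Corollary~\ref{cor_real_diag} is deliberately given up in exchange for accommodating all $m$ relations with a single entity-embedding matrix. This also explains why the rank of the factorization is allowed to grow to $nm$: the theorem trades the tight dimension of the single-relation case for an unconditional existence result valid for arbitrary families of real square matrices.
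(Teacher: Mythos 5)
Your proposal is correct and follows essentially the same route as the paper: apply Theorem~\ref{main_thm} and Theorem~\ref{spectral_thm} to each $X_i$ individually, horizontally concatenate the unitary factors into $E = [E_1 \cdots E_m]$, and place each $\Lambda_i$ in the $i$-th diagonal block of an otherwise-zero $nm \times nm$ diagonal matrix $W_i$, so that $E W_i E^* = E_i \Lambda_i E_i^* = \Z_i$ and $\real(E W_i E^*) = X_i$. Your explicit block-product expansion and the remark on the loss of unitarity of $E$ match the paper's construction and its subsequent discussion.
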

\begin{proof}
From Theorem \ref{main_thm} we have $X_i = \real(E_i W_i E_i^*)$, 
where $W_i \in\C^{n \times n}$ is diagonal,
and each $E_i \in\C^{n \times n}$ is unitary for all $i$ in $1,\ldots,m$.

Let $E = \left[ E_1 \ldots E_m \right]$, and
\begin{align}
    \Lambda_i &= \begin{bmatrix}
           \mathbf{0}^{((i-1)n) \times ((i-1)n)} & &\\
           & W_i&& \\
           & & \mathbf{0}^{((m-i)n) \times ((m-i)n)}\notag
         \end{bmatrix}
\end{align}
where $\mathbf{0}^{l \times l}$ the zero $l \times l$ matrix.
Therefore $X_i = \real(E \Lambda_i E^*)$ for all $i$ in $1,\ldots,m$.
\end{proof}

By construction, the rank of the decomposition is at most $nm$. 
When $m \leq n$, this bound actually matches the general upper bound on 
the rank of the canonical polyadic (CP) decomposition
\cite{hitchcock-sum-1927,kruskal1989rank}. Since $m$ corresponds to the number of relations and $n$
to the number of entities, $m$ is always smaller than $n$ in real world knowledge graphs,
hence the bound holds in practice.

Though when it comes to relational learning, we might expect
the actual rank to be much lower than $nm$ for two reasons. The first one, as discussed
above, is that we are dealing with sign tensors, hence the rank
of the matrices $\Relation_r$ need only match the sign-rank of the
partially-observed matrices $Y_r$.
The second one is that the matrices are related to each other,
as they all represent the same entities in different relations,
and thus benefit from sharing latent dimensions. As opposed to the construction
exposed in the proof of Theorem \ref{thm:main_thm_tensors}, where
other relations dimensions are canceled out. In practice,
the rank needed to generalize well is indeed much lower than $nm$ as we
show experimentally in Figure \ref{fig:mrr_vs_rank}. 

Also, note that
with the construction of the proof of Theorem \ref{thm:main_thm_tensors},
the matrix $E = \left[ E_1 \ldots E_m \right]$
is not unitary any more. 
However the unitary constraints in the matrix case serve only
the proof of existence, which is just one solution among the
infinite ones of same rank. 
In practice, imposing orthonormality is essentially a numerical 
commodity for the decomposition of dense matrices,
through iterative methods for example \cite{saad1992numerical}.
When it comes to matrix and tensor completion, and
thus generalisation, imposing such constraints is more of
a numerical hassle than anything else, especially for
gradient methods. As there is no apparent link
between orthonormality and generalisation properties,
we did not impose these constraints when learning this model 
in the following experiments.

\section{Algorithm}
\label{sec:algo}

Algorithm \ref{SGDC} describes stochastic gradient descent (SGD) to learn the proposed 
multi-relational model with the AdaGrad learning-rate updates \cite{duchi2011adaptive}. 
We refer to the proposed model as \textsc{ComplEx}, for Complex Embeddings.
We expose a version of the algorithm that uses only real-valued vectors, in order 
to facilitate its implementation. To do so, we use separate real-valued representations of the
real and imaginary parts of the embeddings.

These real and imaginary part vectors are 
initialized with vectors having a zero-mean normal distribution with unit variance.
If the training set $\Omega$ contains only positive triples, negatives are generated for
each batch using the \emph{local closed-world assumption} as in \citet{bordes2013translating}. 
That is, for each triple, we randomly change either the subject or the object, 
to form a negative example.
In this case the parameter $\eta>0$ sets the number of negative triples
to generate for each positive triple. Collision with positive triples in $\Omega$
is not checked, as it occurs rarely in real world knowledge graphs as they are largely sparse,
and may also be computationally expensive.

Squared gradients are accumulated to compute AdaGrad learning rates, then gradients are updated.
Every $s$ iterations, the parameters $\Theta$ are evaluated over the evaluation set $\Omega_v$
({\it evaluate\_AP\_or\_MRR}$(\Omega_v;\Theta)$ function 
in Algorithm \ref{SGDC}). If the data set contains both positive and negative examples,
average precision (AP) is used to evaluate the model. If the data set 
contains only positives, then mean reciprocal rank (MRR) is used as average precision
cannot be computed without true negatives. The optimization process is stopped when 
the measure considered decreases compared to the last evaluation (early stopping).

Bern($p$) is the Bernoulli distribution, the {\it one\_random\_sample}$(\EntitySpace)$ function
sample uniformly one entity in the set of all entities $\EntitySpace$,
and the {\it sample\_batch\_of\_size\_b}$(\Omega,b)$
function sample $b$ true and false triples uniformly at random
from the training set $\Omega$.

For a given embedding size $\rank$, let us rewrite Equation~(\ref{eqn:full_model}), by denoting
the real part of embeddings with primes and the imaginary part with double primes: 
$e'_i = \real(e_i)$, $e''_i = \imag(e_i)$, $w'_r = \real(w_r)$, $w''_r = \imag(w_r)$.
The set of parameters is $\Theta=\{e'_i,e''_i,w'_r,w''_r \in \R^K,
i \in \setent, r \in \setrel \}$,
and the scoring function involves only real vectors:
\begin{alignat}{999}
\phi(r,s,o;\Theta) = \quad &\left<w'_r,e'_s, e'_o\right> &&+ \left<w'_r,e''_s, e''_o\right> \notag\\
    + &\left<w''_r, e'_s, e''_o\right> &&- \left<w''_r,e''_s, e'_o\right>
\end{alignat}
where each entity and each relation has two real embeddings.

Gradients are now easy to write:
\begin{alignat*}{999}
\label{gradients}
    \grad_{e'_s} \phi(r,s,o;\Theta) &= (w'_r &&\odot e'_o) &&+ (w''_r &&\odot e''_o),\\
    \grad_{e''_s} \phi(r,s,o;\Theta) &= (w'_r &&\odot e''_o) &&- (w''_r &&\odot e'_o),\\
    \grad_{e'_o} \phi(r,s,o;\Theta) &= (w'_r &&\odot e'_s) &&- (w''_r &&\odot e''_s),\\
    \grad_{e''_o} \phi(r,s,o;\Theta) &= (w'_r &&\odot e''_s) &&+ (w''_r &&\odot e'_s),\\
    \grad_{w'_r} \phi(r,s,o;\Theta) &= (e'_s &&\odot e'_o) &&+ (e''_s &&\odot e''_o),\\
    \grad_{w''_r} \phi(r,s,o;\Theta) &= (e'_s &&\odot e''_o) &&- (e''_s &&\odot e'_o),
\end{alignat*}
where $\odot$ is the element-wise (Hadamard) product.

We optimized the negative log-likelihood of the logistic model described in Equation~(\ref{observation-model}) with $L^2$ regularization on the parameters $\Theta$:
\begin{eqnarray}
    \gamma(\Omega;\Theta) &=&
    \sum_{((r,s,o),y) \in \Omega} \log( 1 + \exp(-y \phi(r,s,o;\Theta))) + \lambda ||\Theta||^2_2\
    \label{eq:objective}
\end{eqnarray}
where $\lambda \in \R_+$ is the regularization parameter.


To handle regularization, note that using separate representations
for the real and imaginary parts does not change anything as
the squared $L^2$-norm of a complex vector $v=v'+iv''$
is the sum of the squared modulus of each entry:
\begin{eqnarray}
||v||^2_2 &=& \sum_j \sqrt{v_j^{\prime2} + v_j^{\prime\prime2}}^2\notag\\
&=& \sum_j v_j^{\prime2} + \sum_j  v_j^{\prime\prime2}\notag\\
&=& ||v'||^2_2 +  ||v''||^2_2\notag\,,
\end{eqnarray}
which is actually the sum of the $L^2$-norms of the vectors of the real and imaginary parts.

We can finally write the gradient of $\gamma$ with respect to a \emph{real} embedding $v$ for
one triple $(r,s,o)$ and its truth value $y$:
\begin{eqnarray}
    \grad_v \gamma(\{((r,s,o),y)\};\Theta) &=& -y \sigma(-y \phi(r,s,o;\Theta)) \grad_v \phi(r,s,o;\Theta) + 2\lambda v\,.
\end{eqnarray}

\begin{algorithm}[H]
\caption{Stochastic gradient descent with AdaGrad for the \textsc{ComplEx} model}
\label{SGDC}
\begin{algorithmic}
\INPUT Training set $\Omega$, validation set $\Omega_v$, learning rate $\alpha\in \mathbb{R}_{++}$, rank $K\in \mathbb{Z}_{++}$, $L^2$ regularization factor $\lambda\in \mathbb{R}_{+}$, negative ratio $\eta\in \mathbb{Z}_{++}$, batch size $b\in \mathbb{Z}_{++}$, maximum iteration $m\in \mathbb{Z}_{++}$, validate every $s\in \mathbb{Z}_{++}$ iterations, AdaGrad regularizer $\epsilon = 10^{-8}$.
\OUTPUT Embeddings $e', e'', w', w''$.
\STATE $e'_i\sim \mathcal{N}(\mathbf{0}^k, I^{k \times k})$ , $e''_i \sim \mathcal{N}(\mathbf{0}^k, I^{k \times k})$ for each $i \in \mathcal{E}$
\STATE $w'_i \sim \mathcal{N}(\mathbf{0}^k, I^{k \times k})$, $w''_i \sim \mathcal{N}(\mathbf{0}^k, I^{k \times k})$ for each $i \in \mathcal{R}$

\STATE $g_{e'_i} \gets \mathbf{0}^k$ , $g_{e''_i} \gets \mathbf{0}^k $ for each $i \in \mathcal{E}$
\STATE $g_{w'_i} \gets \mathbf{0}^k$ , $g_{w''_i} \gets \mathbf{0}^k $ for each $i \in \mathcal{R}$
\STATE $previous\_score \gets 0$
\FOR{$i=1,\ldots,m$}
    \FOR {$j=1,\ldots,|\Omega|/b$}
        \STATE $\Omega_b \gets$ {\it sample\_batch\_of\_size\_b}$(\Omega,b)$
        \STATE // Negative sampling:
        \STATE $\Omega_n \gets \{ \emptyset \}$
        \FOR {$((r,s,o),y)$ in $\Omega_b$}
            \FOR{$l=1,\ldots,\eta$}
                \STATE $e \gets$ {\it one\_random\_sample}$(\EntitySpace)$
                \IF{Bern$(0.5) >$ 0.5}
                    \STATE $\Omega_n \gets \Omega_n \cup \{((r,e,o),-1)\}$ 
                \ELSE
                    \STATE $\Omega_n \gets \Omega_n \cup \{((r,s,e),-1)\}$ 
                \ENDIF
            \ENDFOR
        \ENDFOR
        \STATE $\Omega_b \gets \Omega_b \cup \Omega_n$
        \FOR {$((r,s,o),y)$ in $\Omega_b$}
            \FOR{$v$ in $\Theta$}
                \STATE // AdaGrad updates:
                \STATE $g_v \gets g_v + (\grad_v \gamma(\{((r,s,o),y)\};\Theta))^2$
                \STATE // Gradient updates:
                \STATE $v \gets v - \frac{\alpha}{g_v + \epsilon} \grad_v \gamma(\{((r,s,o),y)\};\Theta)$
            \ENDFOR
        \ENDFOR
    \ENDFOR
    \STATE // Early stopping
    \IF{$i \mod s = 0$}
        \STATE $current\_score \gets$ {\it evaluate\_AP\_or\_MRR}$(\Omega_v;\Theta)$ 
            \IF{$current\_score \leq previous\_score$}
                \STATE \textbf{break} 
            \ENDIF
            \STATE $previous\_score \gets current\_score$
    \ENDIF
\ENDFOR
\STATE \textbf{return} $\Theta$
\end{algorithmic}
\end{algorithm}

\section{Experiments}
\label{sec:expe}

We evaluated the method proposed in this paper on both synthetic and real data sets. 
The synthetic data set contains both symmetric and antisymmetric
relations, whereas the real data sets are standard link prediction benchmarks
based on real knowledge graphs. 

We compared \textsc{ComplEx} to state-of-the-art models, namely \textsc{\textsc{TransE}} \cite{bordes2013translating},
\textsc{DistMult} \cite{Yang2015}, RESCAL \cite{Nickel2011} and also to the canonical polyadic decomposition (CP)
\cite{hitchcock-sum-1927}, to emphasize empirically the importance of learning 
unique embeddings for entities. 
For experimental fairness, we reimplemented these models within the same framework as the \textsc{ComplEx} model, using a Theano-based SGD implementation\footnote{https://github.com/lmjohns3/downhill} \cite{theano}. 

For the \textsc{TransE} model, results were obtained with its 
original max-margin loss, as
it turned out to yield better results for this model only. 
To use this max-margin loss on data sets with observed
negatives (Sections \ref{sec:synth_task} and \ref{sec:kinships_umls}), positive
triples were replicated when necessary to match the number of negative triples, 
as described in \citet{garcia2016combining}.
All other models are trained with the negative log-likelihood of the logistic model (Equation~(\ref{eq:objective})). In all the following experiments we used a maximum number of iterations $m=1000$,
a batch size $b= \frac{|\Omega|}{100}$, and validated the models for early stopping 
every $s=50$ iterations.

\subsection{Synthetic Task}
\label{sec:synth_task}

To assess our claim that \textsc{ComplEx} can accurately model jointly symmetry and antisymmetry, we randomly generated a knowledge graph of two relations and 30 entities. One relation is entirely symmetric, while the other is completely antisymmetric. This data set corresponds to a $2 \times 30 \times 30$ tensor. Figure \ref{fig:symmetry_example} shows a part of this randomly generated tensor, with a symmetric slice and an antisymmetric slice, decomposed into training, validation and test sets. To ensure that all test values are predictable, the upper triangular parts of the matrices are always kept in the training set, and the diagonals are unobserved. We conducted a 5-fold cross-validation on the lower-triangular matrices, using the upper-triangular parts plus 3 folds for training, one fold for validation and one fold for testing. Each training set contains 1392 observed triples, whereas validation and test sets contain 174 triples each. 

Figure \ref{fig:exp_sym_antisym} shows the best cross-validated average precision (area under the precision-recall curve) for different factorization models of ranks ranging up to 50. The regularization parameter $\lambda$ is validated in $\{$0.1, 0.03, 0.01, 0.003,0.001, 0.0003, 0.00001, 0.0$\}$ and the learning rate $\alpha$ was initialized to 0.5.

As expected, \textsc{DistMult} \cite{Yang2015} is not able to model antisymmetry and only predicts the symmetric relations correctly. Although \textsc{TransE} \cite{bordes2013translating} is not a symmetric model, it performs poorly in practice, particularly on the antisymmetric relation. 
RESCAL \cite{Nickel2011}, with its large number of parameters, quickly overfits as the rank grows. Canonical Polyadic (CP) decomposition \cite{hitchcock-sum-1927} fails on both relations as it has to push symmetric and antisymmetric patterns through the entity embeddings. Surprisingly, only \textsc{ComplEx} succeeds even on such simple data.

\begin{figure}[ht]
	\centering
	\includegraphics[width=0.28\linewidth]{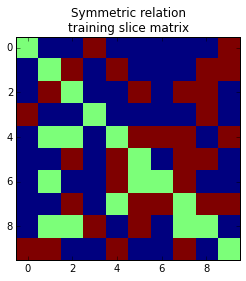}
	\includegraphics[width=0.28\linewidth]{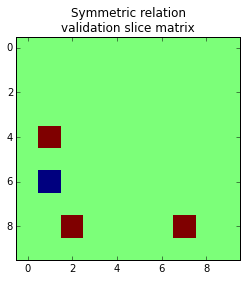}
	\includegraphics[width=0.28\linewidth]{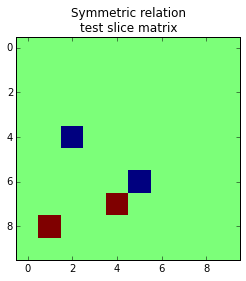}
	
	\includegraphics[width=0.28\linewidth]{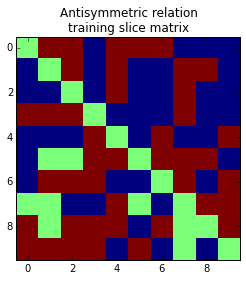}
	\includegraphics[width=0.28\linewidth]{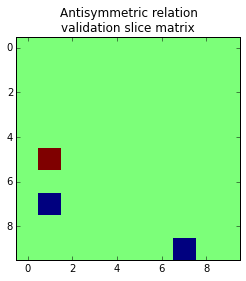}
	\includegraphics[width=0.28\linewidth]{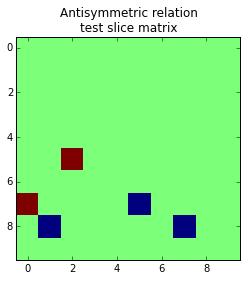}

    \caption{Parts of the training, validation and test sets of the generated experiment with one symmetric and one antisymmetric relation. Red pixels are positive triples, blue are negatives, and green missing ones. Top: Plots of the symmetric slice (relation) for the 10 first entities. Bottom: Plots of the antisymmetric slice for the 10 first entities.}
	\label{fig:symmetry_example}
\end{figure}



\begin{figure*}[ht]{{\extracolsep{8pt}}}
	\centering
	\includegraphics[width=0.49\textwidth]{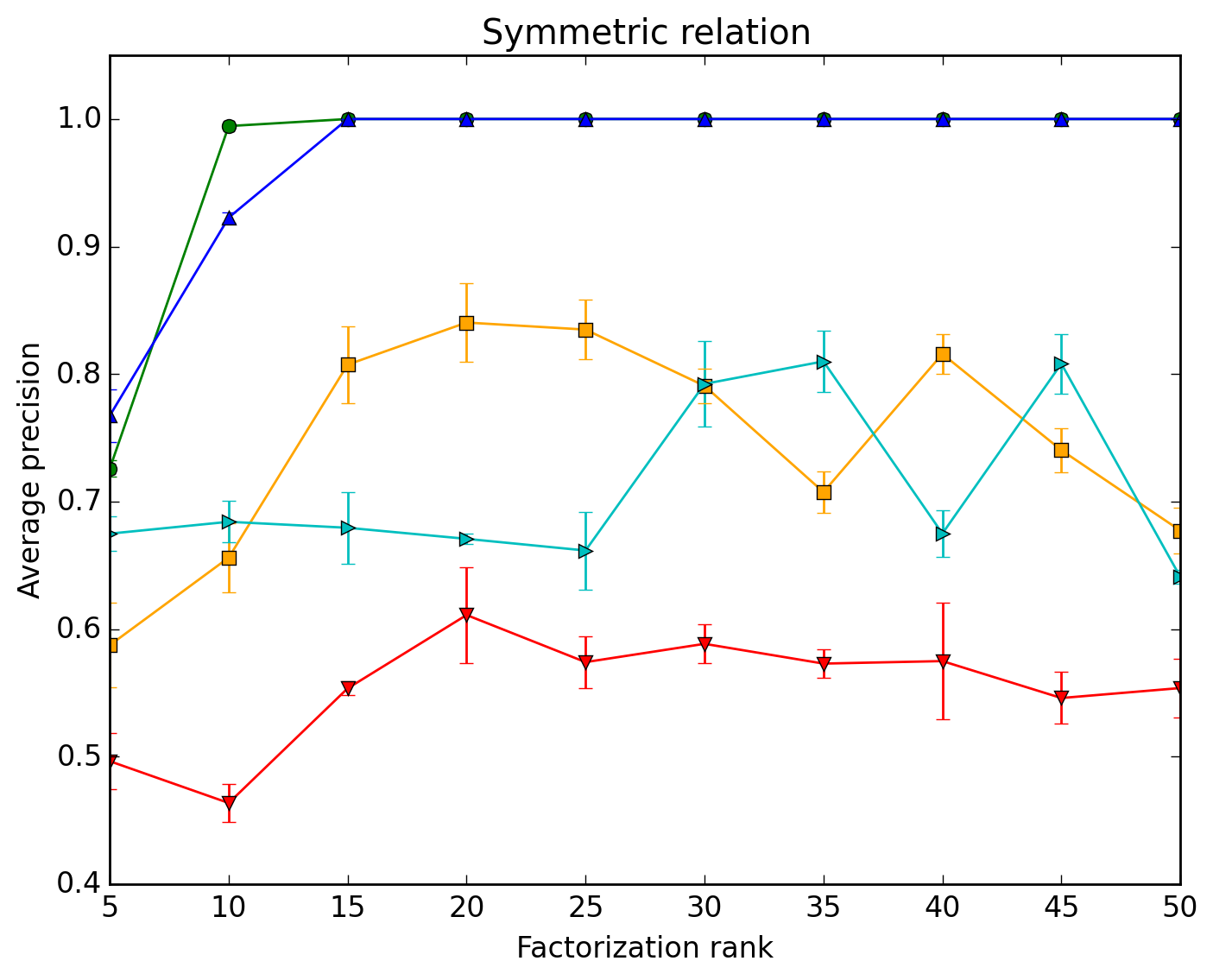}
	\includegraphics[width=0.49\textwidth]{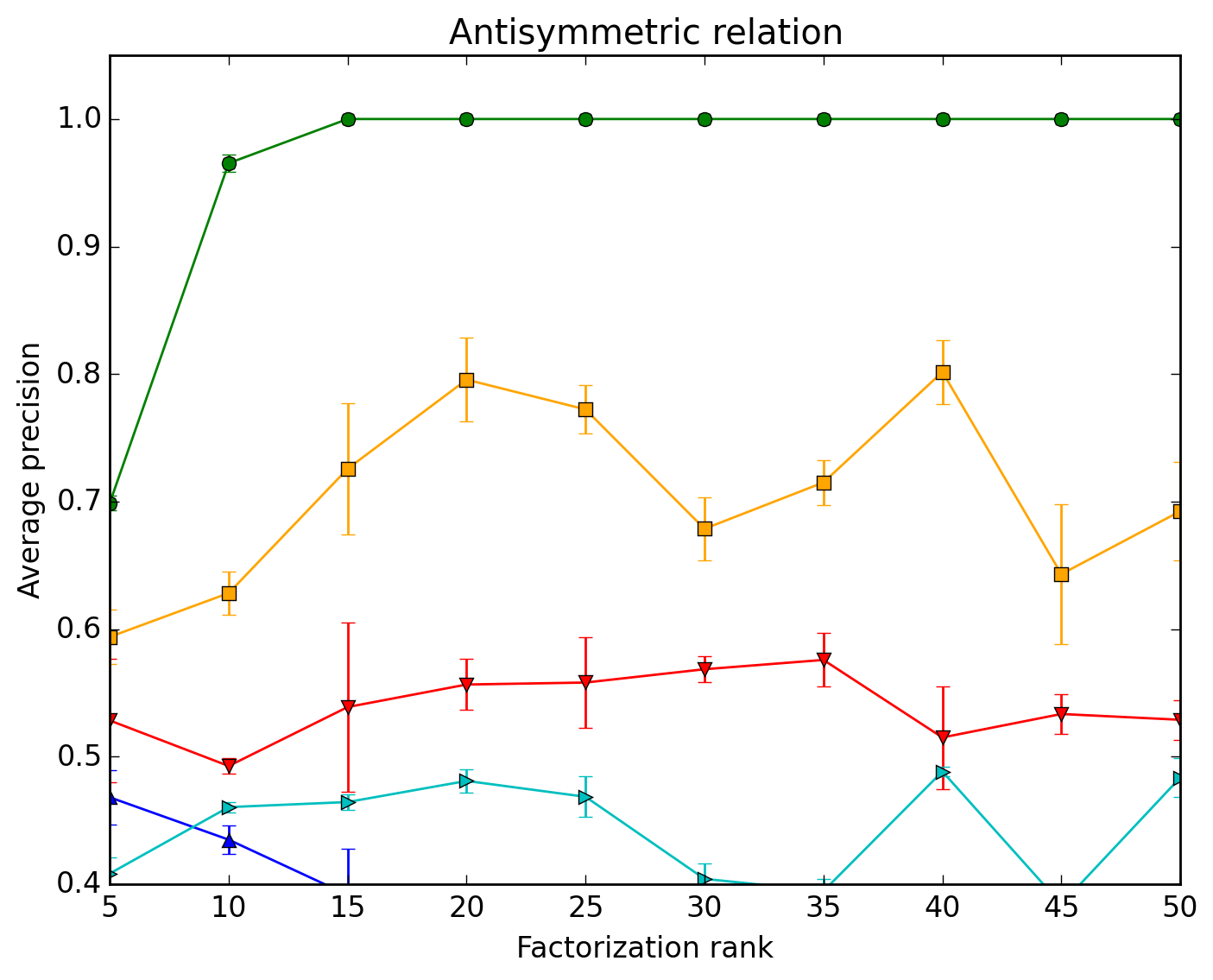} 
	\includegraphics[width=0.65\textwidth]{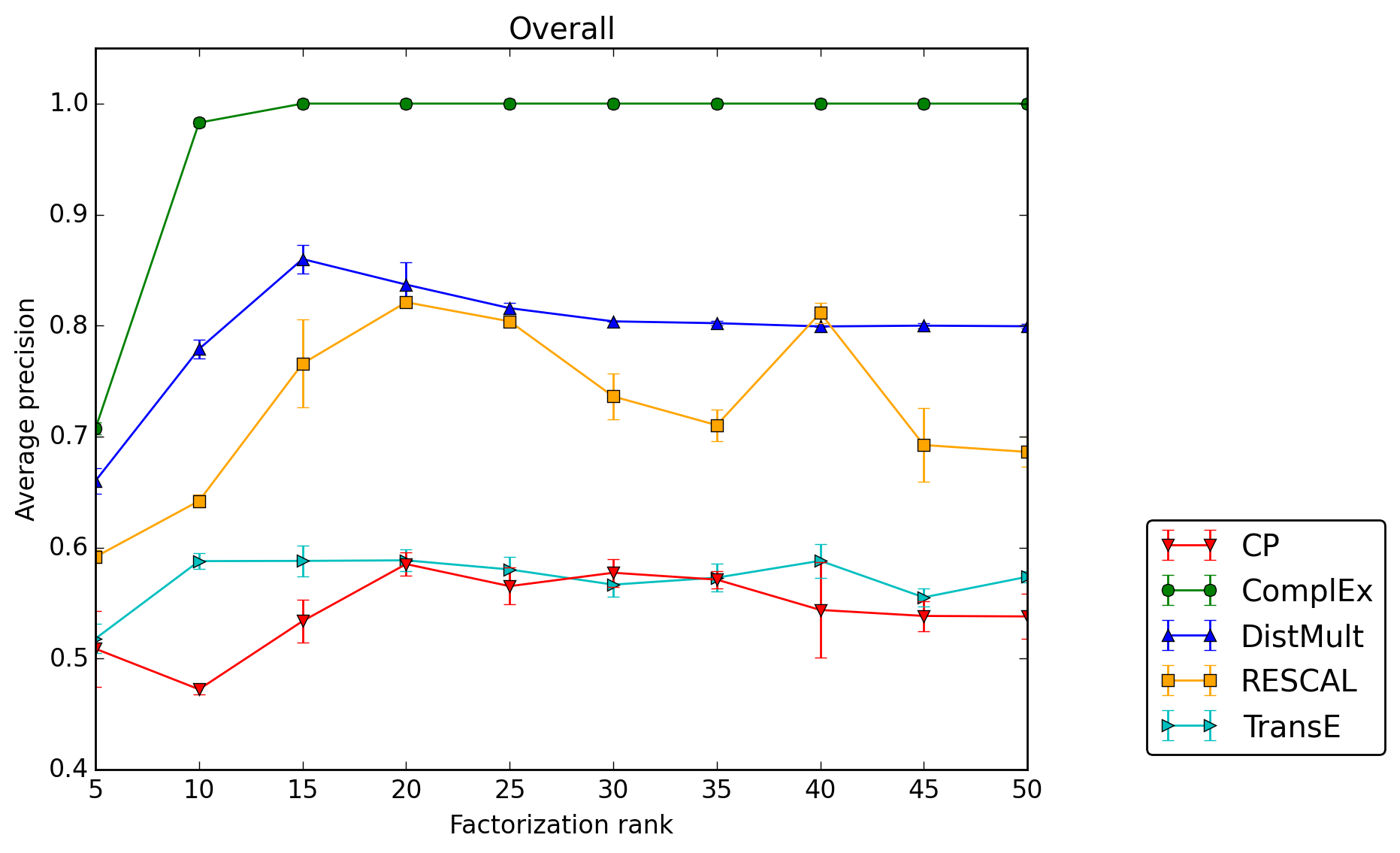}
	\caption{Average precision (AP) for each factorization rank ranging from 5 to 50 for different state-of-the-art models on the synthetic task. Learning is performed jointly on the symmetric relation and on the antisymmetric relation. Top-left: AP over the symmetric relation only. Top-right: AP over the antisymmetric relation only. Bottom: Overall AP.}
	\label{fig:exp_sym_antisym}
\end{figure*}

\subsection{Real Fully-Observed Data Sets: Kinships and UMLS}
\label{sec:kinships_umls}

We then compare all models on two fully observed data sets, that
contain both positive and negative triples, also called the \emph{closed-world
assumption}.
The Kinships data set \cite{denham1973detection}
describes the 26 different kinship relations of the Alyawarra tribe in Australia, among 104 individuals.
The unified medical language system (UMLS) data set \cite{mccray2003upper} represents 135 medical concepts and diseases, linked
by 49 relations describing their interactions. 
Metadata for the two data sets is summarized in Table
\ref{tab:kinship_and_umls_meta}.

\begin{table}[ht]
    \centering
    \begin{tabular}{l|rrr}
        Data set & $|\EntitySpace|$ & $|\RelationSpace|$ & Total number of triples\\ \hline
        Kinships & 104 & 26 & 281,216 \\
        UMLS & 135 & 49 & 893,025 \\
    \end{tabular}
    \caption{Number of entities $|\EntitySpace|$, relations $|\RelationSpace|$, and observed triples (all are observed) for the Kinships and UMLS data sets.}
    \label{tab:kinship_and_umls_meta}
\end{table}

We performed a 10-fold cross-validation, keeping 8 for training, one for validation and one for testing.
Figure \ref{fig:exp_kinships_umls} shows the best cross-validated average precision for ranks ranging up to 50, and error bars show the standard deviation over the 10 runs.
The regularization parameter $\lambda$ is validated in $\{$0.1, 0.03, 0.01, 0.003, 0.001, 0.0003, 0.00001, 0.0$\}$ and the learning rate $\alpha$ was initialized to 0.5.

\begin{figure}[t]
	\centering
	\includegraphics[width=0.75\textwidth]{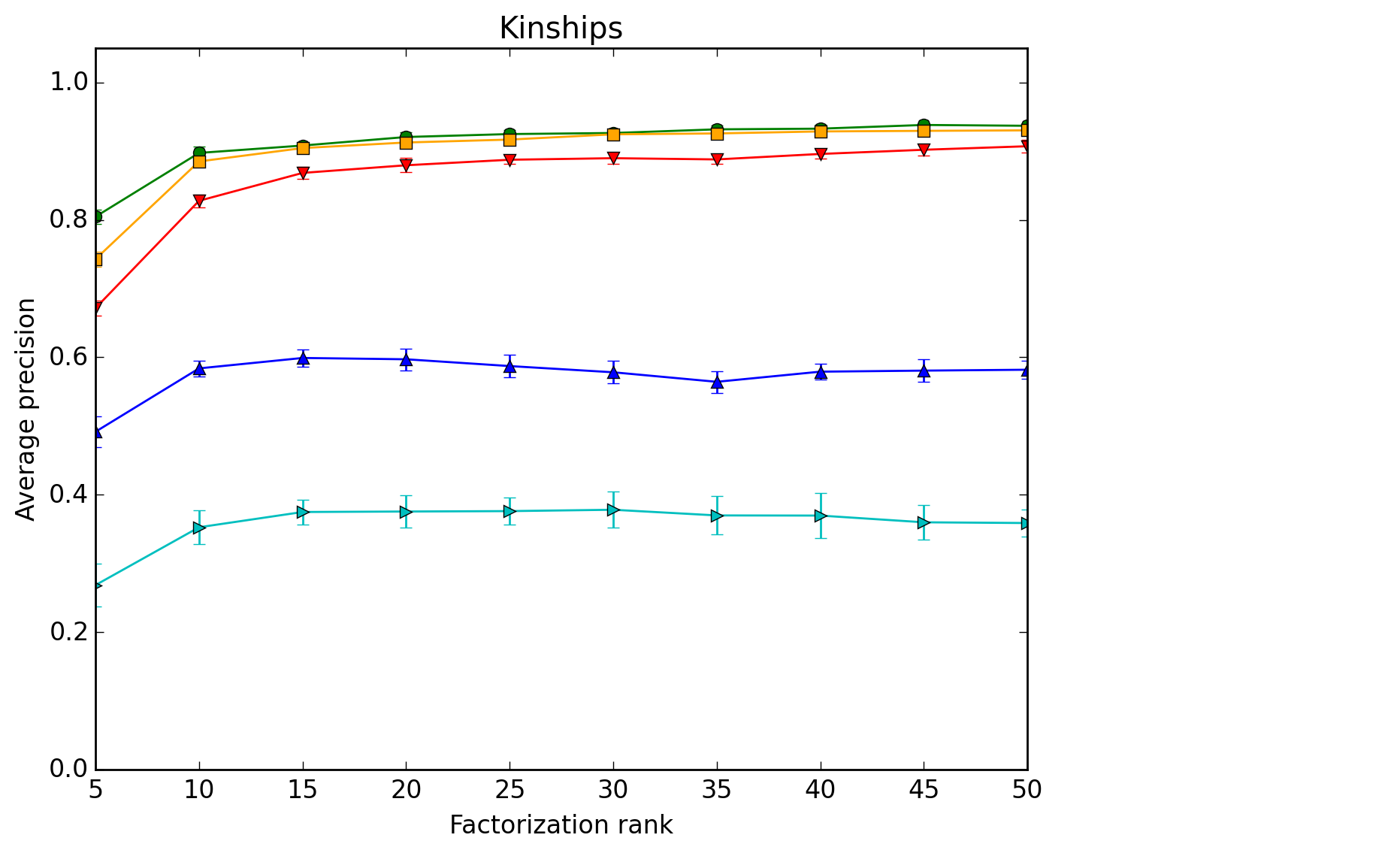}
	\includegraphics[width=0.75\textwidth]{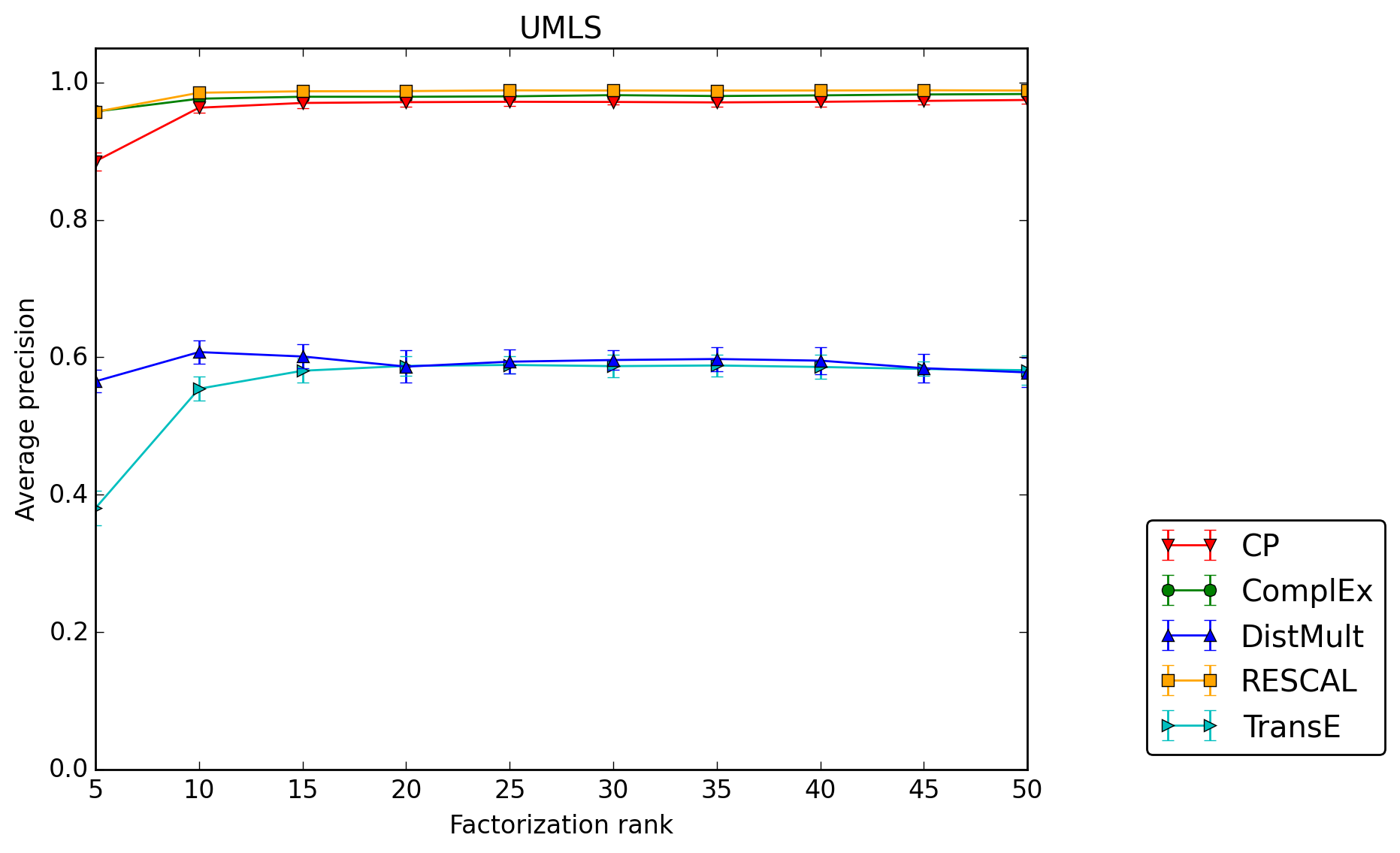} 
	\caption{Average precision (AP) for each factorization rank ranging from 5 to 50 for different state-of-the-art models on the Kinships data set (top) and on the UMLS data set (bottom).}
	\label{fig:exp_kinships_umls}
\end{figure}

On both data sets \textsc{ComplEx}, RESCAL and CP are very close, with a slight advantage for \textsc{ComplEx} on Kinships, and for RESCAL on UMLS. 
\textsc{DistMult} performs poorly here as many relations are antisymmetric
both in UMLS (causal links, anatomical hierarchies)
and Kinships (being father, uncle or grand-father). 

The fact that CP, RESCAL and \textsc{ComplEx} 
work so well on these data sets illustrates
the importance of having an expressive enough model,
as \textsc{DistMult} fails because of \mbox{antisymmetry}; 
the power of the multilinear product---that is the
tensor factorization approach---as \textsc{TransE} can be 
seen as a sum of bilinear products 
\cite{garcia2016combining}; but not yet the importance
of having unique entity embeddings, as CP works well. 
We believe having separate subject and object-entity embeddings
works well under the closed-world assumption, because of the
amount of training data compared to the number of embeddings
to learn.
Though when only a fractions of the positive training
examples are observed (as it is most often the case),
we will see in the next experiments 
that enforcing unique entity
embeddings is key to good generalization.

\subsection{Real Sparse Data Sets: FB15K and WN18}

Finally, we evaluated \textsc{ComplEx} on the FB15K and WN18 data sets, as they are well 
established benchmarks for the link prediction task. 
FB15K is a subset of Freebase \cite{Bollacker2008}, a curated knowledge graph of general facts, whereas WN18 is a subset of WordNet \cite{fellbaum1998wordnet}, a database featuring lexical relations between words.  We used the same training, validation and test set splits as in \citet{bordes2013translating}. Table \ref{tab:fb15k_wn18_meta} summarizes the metadata of the two data sets.

\begin{table}[t]
    \centering
    \begin{tabular}{l|rrrrr}
        & & & \multicolumn{3}{c}{Number of triples in sets:} \\
        Data set & $|\EntitySpace|$ & $|\RelationSpace|$ & Training & Validation & Test\\ \hline
        WN18 & 40,943 & 18 & 141,442 & 5,000 & 5,000 \\
        FB15K & 14,951 & 1,345 & 483,142 & 50,000 & 59,071 \\
    \end{tabular}
    \caption{Number of entities $|\EntitySpace|$, relations $|\RelationSpace|$, and observed triples in each split for the FB15K and WN18 data sets.}
    \label{tab:fb15k_wn18_meta}
\end{table}

\subsubsection{Experimental Setup}

As both data sets contain only positive triples, we generated negative samples
using the local closed-world assumption, as described in Section
\ref{sec:algo}. For evaluation, we measure the quality of the ranking of each test triple among all possible subject and object substitutions
: $r(s',o)$ and $r(s,o')$, for each $s', o'$ in $\setent$, as used in previous studies \cite{bordes2013translating,nickel_2016_holographic}.
 Mean Reciprocal Rank (MRR) and Hits at $N$ are standard evaluation measures for these data sets and come in two flavours: raw and filtered. The filtered metrics are computed \emph{after} removing all the other positive observed triples that appear in either training, validation or test set from the ranking, whereas the raw metrics do not remove these. 

Since ranking measures are used, previous studies generally preferred a max-margin ranking loss for the task \cite{bordes2013translating,nickel_2016_holographic}. We chose to use the negative log-likelihood of the logistic model---as described in the previous section---as it is a continuous surrogate of the sign-rank, and has been shown to learn compact representations for several important relations, especially for transitive relations~\cite{bouchard2015approximate}. As previously stated, we tried both losses in preliminary work, and indeed training the models with the log-likelihood yielded better results than with the max-margin ranking loss, especially on FB15K---except with \textsc{TransE}.

We report both filtered and raw MRR, and filtered Hits at 1, 3 and 10 in Table \ref{tab:fb15k_wn18_res} for the evaluated models.
The \textsc{HolE} model has recently been shown to be equivalent to \textsc{ComplEx} \cite{hayashi2017equivalence}, we record the original results for \textsc{HolE} as reported in \citet{nickel_2016_holographic} and briefly discuss the discrepancy of results
obtained with \textsc{ComplEx}. 

Reported results are given for the best set of hyper-parameters evaluated on the validation set for each model, after a distributed grid-search on the following values: $\rank \in \{$10, 20, 50, 100, 150, 200$\}$, $\lambda \in \{$0.1, 0.03, 0.01, 0.003, 0.001, 0.0003, 0.0$\}$, $\alpha \in \{$1.0, 0.5, 0.2, 0.1, 0.05, 0.02, 0.01$\}$,  $\eta \in \{$1, 2, 5, 10$\}$ with $\lambda$ the $L^2$ regularization parameter, $\alpha$ the initial learning rate, and $\eta$ the number of negatives generated per positive training triple. We also tried varying the batch size but this had no impact and we settled with 100 batches per epoch. 
With the best hyper-parameters, training the \textsc{ComplEx} model on a single GPU (NVIDIA Tesla P40) takes 45 minutes on WN18 ($K=150, \eta=1$), and three hours on FB15K ($K=200, \eta=10$).

\subsubsection{Results}
\label{sec:fb15k_res}

\begin{table*}[t]
    \centering
    \resizebox{\columnwidth}{!}{%
    \begin{tabular}{@{\extracolsep{8pt}}lllllllllll@{}}
        \toprule
        
         & \multicolumn{5}{c}{\textbf{WN18}} & \multicolumn{5}{c}{\textbf{FB15K}} \\ \cline{2-6} \cline{7-11}
         & \multicolumn{2}{c}{MRR} & \multicolumn{3}{c}{Hits at} & \multicolumn{2}{c}{MRR} & \multicolumn{3}{c}{Hits at} \\ \cline{2-3} \cline{4-6} \cline{7-8} \cline{9-11}
        
        Model & Filtered & Raw & 1 & 3 & 10 & Filtered & Raw & 1 & 3 & 10 \\ \hline
        
        CP & 0.075 & 0.058 & 0.049 & 0.080 & 0.125 & 0.326 & 0.152 & 0.219 & 0.376 & 0.532 \\
        \textsc{TransE} & 0.454 & 0.335 & 0.089 & 0.823 & 0.934 & 0.380 & 0.221 & 0.231 & 0.472 &  0.641 \\
        RESCAL & 0.894 & 0.583 & 0.867 & 0.918 & 0.935 & 0.461 & 0.226 & 0.324 & 0.536 & 0.720 \\
        \textsc{DistMult} & 0.822 & 0.532 & 0.728 & 0.914 & 0.936 & 0.654 & \textbf{0.242} & 0.546 & 0.733   & 0.824 \\
        \textsc{HolE}* & 0.938 & \textbf{0.616} & 0.930 & \textbf{0.945} & \textbf{0.949} & 0.524 & 0.232 & 0.402 & 0.613 & 0.739\\ \hline
        \textsc{ComplEx} & \textbf{0.941} &  0.587 &  \textbf{0.936} &  \textbf{0.945} &  0.947 & \textbf{0.692} & \textbf{0.242} & \textbf{0.599} & \textbf{0.759}   & \textbf{0.840} \\
        
        \bottomrule
    \end{tabular}
    }
    \caption{Filtered and raw mean reciprocal rank (MRR) for the models tested on the FB15K and WN18 data sets. Hits@N metrics are filtered. *Results reported from \citet{nickel_2016_holographic} for the \textsc{HolE} model, that has been shown to be equivalent to \textsc{ComplEx} \cite{hayashi2017equivalence}, score divergence on FB15K is only due to the loss function used \cite{trouillon2017comparison}.}
    \label{tab:fb15k_wn18_res}
\end{table*}

WN18 describes lexical and semantic hierarchies between concepts and contains many antisymmetric relations such as hypernymy, hyponymy, and being part of. Indeed, the \textsc{DistMult} and \textsc{TransE} models are outperformed here by \textsc{ComplEx} and \textsc{HolE}, which are on a par with respective filtered MRR scores of 0.941 and 0.938, which is expected as both
models are equivalent.

Table \ref{tab:wn18_detailed_res} shows the filtered MRR for the reimplemented models and each relation of WN18, confirming the advantage of \textsc{ComplEx} on antisymmetric relations while losing nothing on the others. 2D projections of the relation embeddings (Figures \ref{fig:pca12} \& \ref{fig:pca34}) visually corroborate the results.

\begin{table}
\centering
    \begin{tabular}{l|lllll}
        
        Relation name & \textsc{ComplEx} & RESCAL & \textsc{DistMult} & \textsc{TransE} & CP\\ \hline
        hypernym  & \textbf{0.953} & 0.935 & 0.791 & 0.446 & 0.109\\
        hyponym  & \textbf{0.946} & 0.932 & 0.710 & 0.361 & 0.009\\
        member\_meronym  & \textbf{0.921} & 0.851 & 0.704 & 0.418 & 0.019\\
        member\_holonym  & \textbf{0.946} & 0.861 & 0.740 & 0.465 & 0.134\\
        instance\_hypernym  & \textbf{0.965} & 0.833 & 0.943 & 0.961 & 0.233\\
        instance\_hyponym  & \textbf{0.945} & 0.849 & 0.940 & 0.745 & 0.040\\
        has\_part  & \textbf{0.933} & 0.879 & 0.753 & 0.426 & 0.035\\
        part\_of  & \textbf{0.940} & 0.888 & 0.867 & 0.455 & 0.094\\
        member\_of\_domain\_topic  & \textbf{0.924} & 0.865 & 0.914 & 0.861 & 0.007\\
        synset\_domain\_topic\_of  & \textbf{0.930} & 0.855 & 0.919 & 0.917 & 0.153\\
        member\_of\_domain\_usage  & \textbf{0.917} & 0.629 & \textbf{0.917} & 0.875 & 0.001\\
        synset\_domain\_usage\_of  & \textbf{1.000} & 0.541 & \textbf{1.000} & \textbf{1.000} & 0.134\\
        member\_of\_domain\_region  & \textbf{0.865} & 0.632 & 0.635 & \textbf{0.865} & 0.001\\
        synset\_domain\_region\_of  & 0.919 & 0.655 & 0.888 & \textbf{0.986} & 0.149\\
        derivationally\_related\_form  & \textbf{0.946} & 0.928 & 0.940 & 0.384 & 0.100\\
        similar\_to  & \textbf{1.000} & 0.001 & \textbf{1.000} & 0.244 & 0.000\\
        verb\_group  & \textbf{0.936} & 0.857 & 0.897 & 0.323 & 0.035\\
        also\_see  & 0.603 & 0.302 &\textbf{0.607} & 0.279 & 0.020\\

    \end{tabular}
    \caption{Filtered Mean Reciprocal Rank (MRR) for the models tested on each relation of the WordNet data set (WN18).}
    \label{tab:wn18_detailed_res}
    
    \vspace{-5mm}
\end{table}

On FB15K, the gap is much more pronounced and the \textsc{ComplEx} model largely outperforms \textsc{HolE}, with a filtered MRR of 0.692 and 59.9\% of Hits at 1, compared to 0.524 and 40.2\% for \textsc{HolE}.
This difference of scores between the two models, though they have been proved
to be equivalent \cite{hayashi2017equivalence}, is due to the use of the aforementioned
max-margin loss in the original \textsc{HolE} publication \cite{nickel_2016_holographic}
that performs worse than the log-likelihood on this dataset,
and to the generation of more than one negative
sample per positive in these experiments. This has been confirmed
and discussed in details by \citet{trouillon2017comparison}. 
The fact that \textsc{DistMult} yields fairly high scores (0.654 filtered MRR)
is also due to the task itself and the evaluation measures used. 
As the dataset only involves true facts, 
the test set never includes the opposite facts $r(o,s)$ of each
test fact $r(s,o)$ for \emph{antisymmetric} relations---as the opposite fact is always false. 
Thus highly scoring the opposite fact barely impacts the
rankings for antisymmetric relations. This is not the case in the fully observed 
experiments (Section \ref{sec:kinships_umls}), as the opposite fact is known to be false---for antisymmetric relations---and largely impacts the average precision of the \textsc{DistMult} model (Figure \ref{fig:exp_kinships_umls}).

RESCAL, that represents each relation with a $K \times K$ matrix, performs well on WN18
as there are few relations and hence not so many parameters. On FB15K though,
it probably overfits due to the large number of relations and thus
the large number of parameters to learn, and performs worse than a 
less expressive model like \textsc{DistMult}.
On both data sets, \textsc{TransE} and CP are largely left behind. This illustrates again the power of the multilinear product in the first case, and the importance of learning unique entity embeddings in the second. CP performs especially poorly on WN18 due to the small number of \mbox{relations}, which magnifies this subject/object difference. 

Figure \ref{fig:mrr_vs_rank} shows that the filtered MRR of the \textsc{ComplEx} model quickly 
converges on both data sets, showing that the low-rank hypothesis is reasonable in practice. 
The little gain of performances for ranks comprised between $50$ and $200$ also shows that 
\textsc{ComplEx} does not perform better because it has twice as many parameters for the 
same rank---the real and imaginary parts---compared to other linear space complexity models 
but indeed thanks to its better expressiveness.

\begin{figure}[t]
	\centering
	\includegraphics[width=0.70\linewidth]{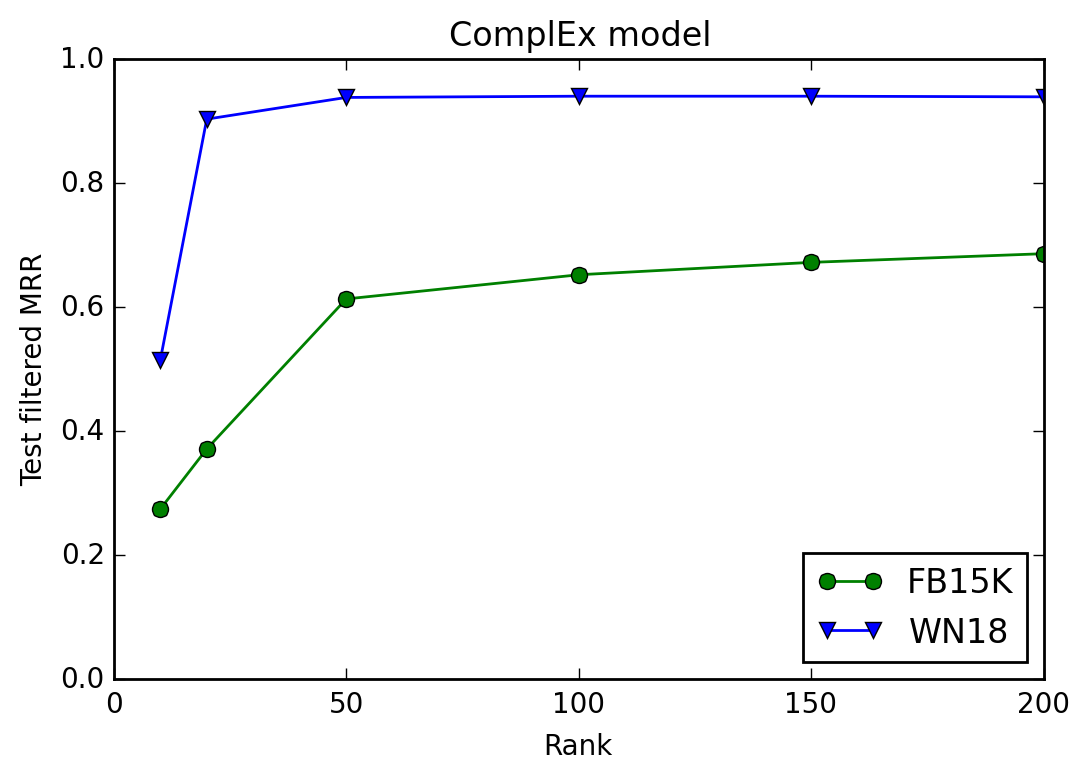}
	
    \caption{Best filtered MRR for \textsc{ComplEx} on the FB15K and WN18 data sets for different ranks. Increasing the rank gives little performance gain for ranks of $50-200$.}
	\label{fig:mrr_vs_rank}
	
\end{figure}

Best ranks were generally 150 or 200, in both cases scores were always very close for all models, suggesting there was no need to grid-search on higher ranks. The number of negative samples per positive sample also had a large influence on the filtered MRR on FB15K (up to +0.08 improvement from 1 to 10 negatives), but not much on WN18. On both data sets regularization was important (up to +0.05 on filtered MRR between $\lambda=0$ and optimal one).
We found the initial learning rate to be very important on FB15K, while not so much on WN18. We think this may also explain the large gap of improvement \textsc{ComplEx} provides on this data set compared to previously published results---as \textsc{DistMult} results are also better than those previously reported \cite{Yang2015}---along with the use of the log-likelihood objective. It seems that in general AdaGrad is relatively insensitive to the initial learning rate, perhaps causing some overconfidence in its ability to tune the step size online and consequently leading to less efforts when selecting the initial step size.

\subsection{Training time}
As defended in \Cref{sec:mat_case}, having a linear time and space complexity
becomes critical when the dataset grows. To illustrate this,
we report in \Cref{fig:training times} the evolution
of the filtered MRR on the validation set as a function of time,
for the best set of validated hyper-parameters for each model.
The convergence criterion used is the decrease of the validation filtered 
MRR---computed every 50 iterations---with a maximum number of iterations of 1000 (see \Cref{SGDC}).
All models have a linear complexity except for RESCAL that
has a quadratic one in the rank of the decomposition,
as it learns one matrix embedding for each relation $r \in \RelationSpace$.
Timings are measured on a single NVIDIA Tesla P40 GPU.

On WN18, all models reach convergence in a reasonable time, between 15 minutes
and 1 hour and 20 minutes. The difference between RESCAL and the other models
is not sharp there, first because its optimal embedding size ($K=50$) 
is lower compared to the other models. 
Secondly, there are only $|\RelationSpace| = 18$ relations
in WN18, hence the memory footprint of RESCAL is pretty similar
to the other models---because it represents only relations with
matrices and not entities.
On FB15K, the difference is much more pronounced, as RESCAL optimal rank
is similar to the other models; and with $|\RelationSpace| = 1345$ relations,
RESCAL has a much higher memory footprint, which implies more 
processor cache misses due to the uniformly-random nature of the SGD sampling.

RESCAL took more than
four days to train on FB15K, whereas other models took between 40 minutes and
3 hours. While a few days might seem manageable, this could not be
the case on larger data sets, as FB15K is but a small
subset of Freebase that contains $|\RelationSpace| = 35000$ relations
\cite{Bollacker2008}. This experimentally supports our claim
that linear complexity is required for scalability.

\begin{figure}[t]
	\centering
	\includegraphics[width=0.75\textwidth]{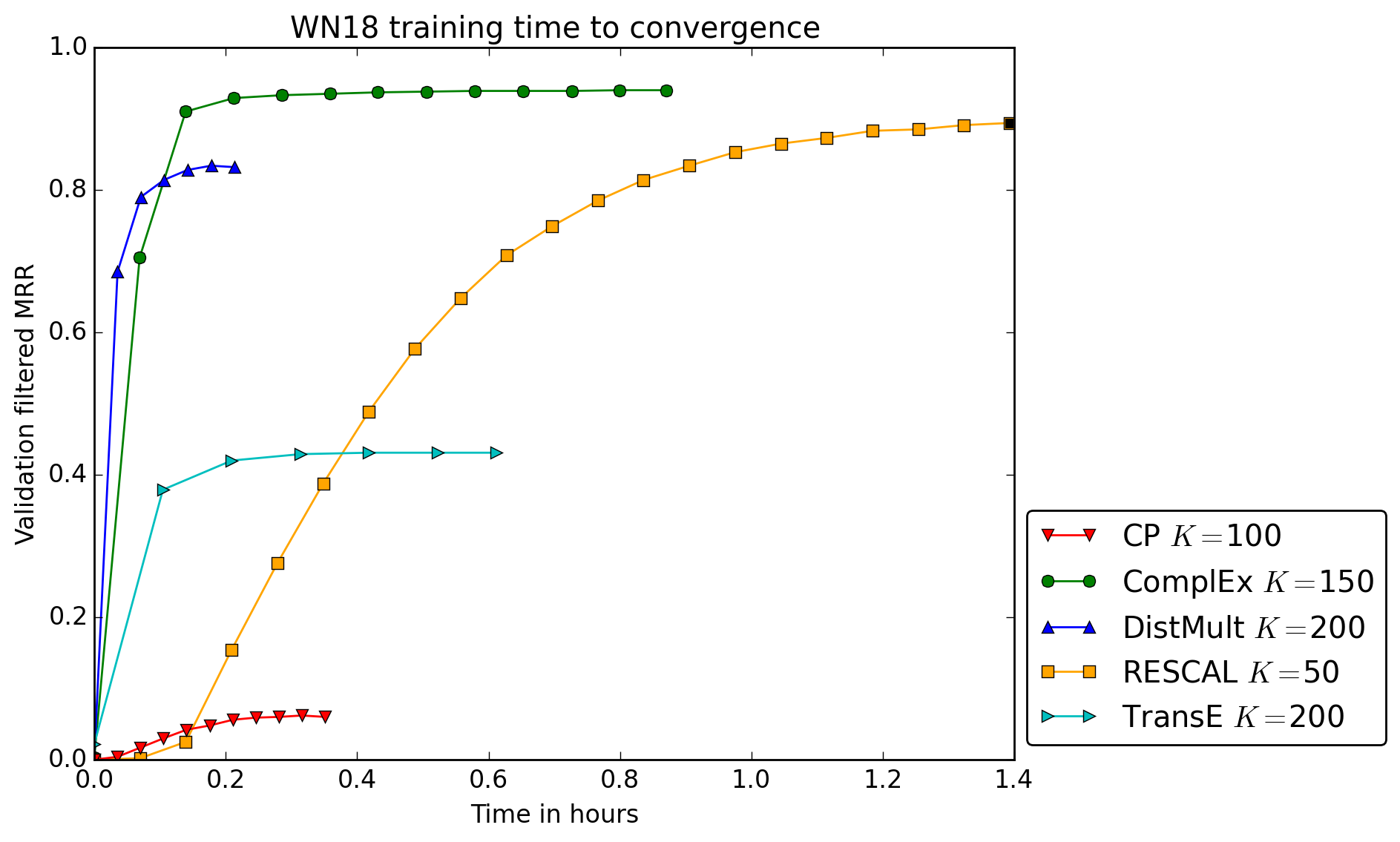} 
	\includegraphics[width=0.75\textwidth]{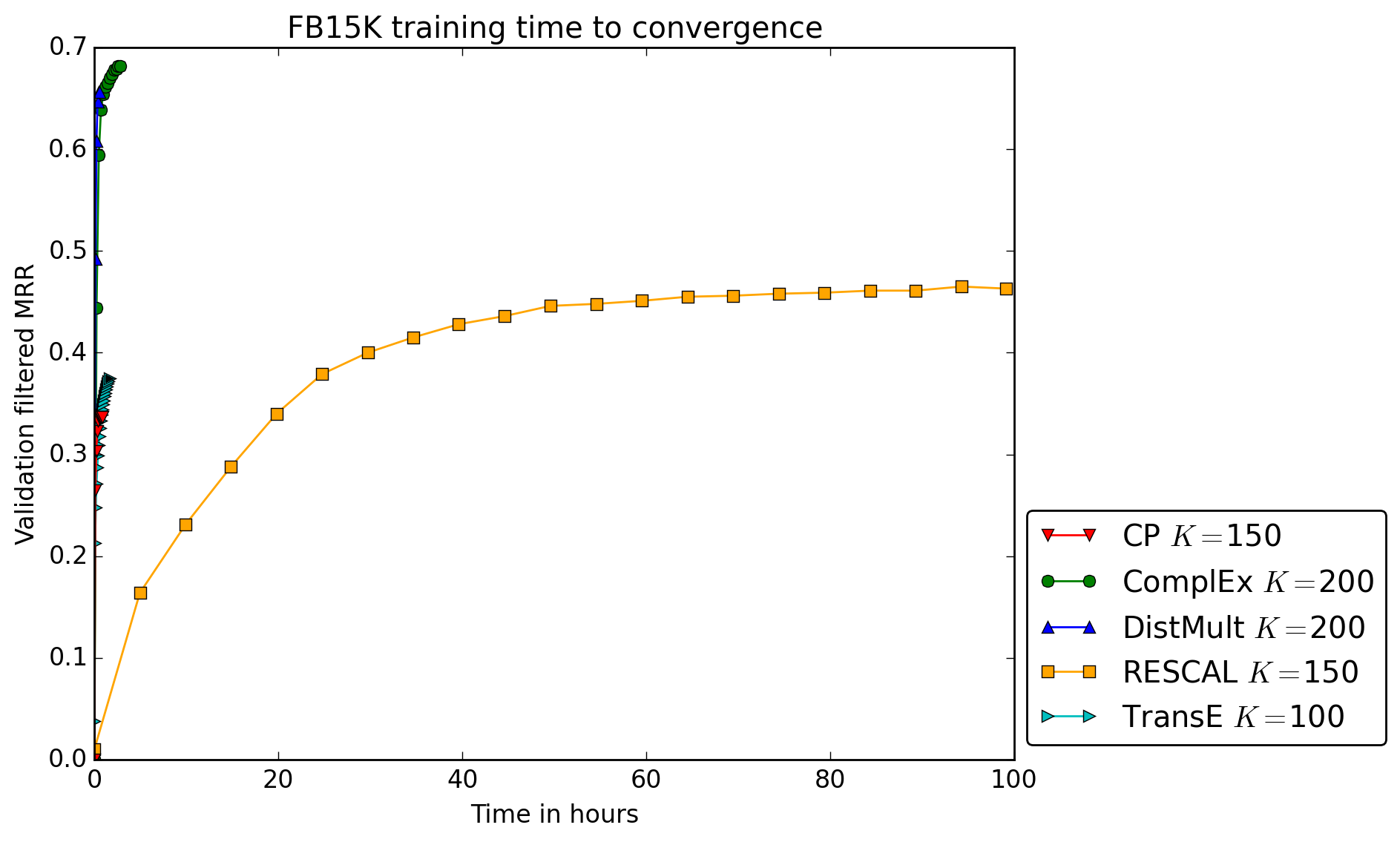}
	\caption{Evolution of the filtered MRR on the validation set as a function of time, on WN18 (top) and FB15K (bottom) for each model for its best set of hyper-parameters. The best rank $K$ is reported in legend. Final black marker indicates that the maximum number of iterations (1000) has been reached (RESCAL on WN18, \textsc{TransE} on FB15K).}
	\label{fig:training times}
\end{figure}

\subsubsection{Influence of Negative Samples}
We further investigated the influence of the number of negatives generated per positive training sample. In the previous experiment, due to computational limitations, the number of negatives per training sample, $\eta$, was validated over the set $\{1, 2, 5, 10\}$. On WN18 it proved to be of no help to have more than one generated negative per positive. Here we explore in which proportions increasing the number of generated negatives leads to better results on FB15K.
To do so, we fixed the best validated $\lambda, \rank, \alpha$ obtained from the previous experiment. We then let $\eta$ vary in $\{1, 2, 5, 10, 20, 50, 100, 200\}$.


\begin{figure}[t]
	\centering
	\includegraphics[width=0.70\linewidth]{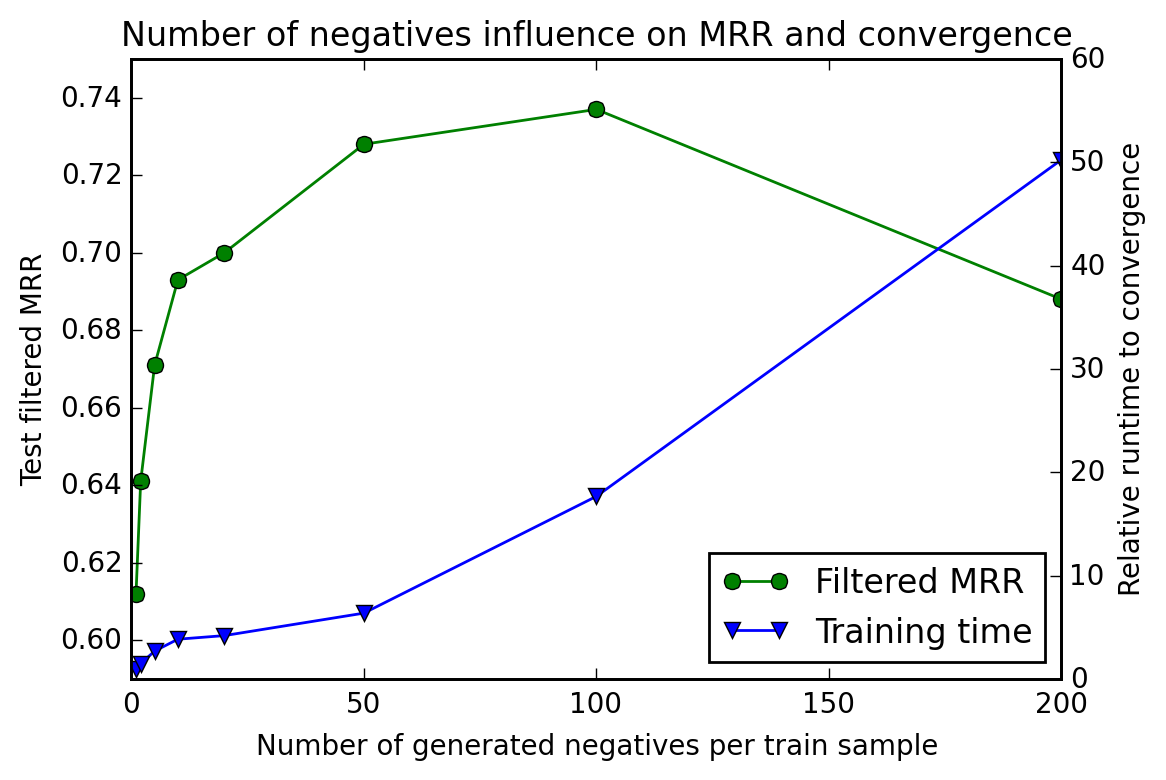}
	
    \caption{Influence of the number of negative triples generated per positive training example on the filtered test MRR and on training time to convergence on FB15K for the \textsc{ComplEx} model with $\rank=200$, $\lambda=0.01$ and $\alpha=0.5$. Times are given relative to the training time with one negative triple generated per positive training sample ($=1$ on time scale).}
	\label{fig:neg_ratio}
	
\end{figure}

Figure \ref{fig:neg_ratio} shows the influence of the number of generated negatives per positive training triple on the performance of \textsc{ComplEx} on FB15K.
Generating more negatives clearly improves the results up to 100 negative triples, with a filtered MRR of 0.737 and 64.8\% of Hits@1, before decreasing again with 200 negatives, probably due to the too large class imbalance. The model also converges with fewer epochs, which compensates partially for the additional training time per epoch, up to 50 negatives. It then grows linearly as the number of negatives increases.

\subsubsection{WN18 Embeddings Visualization}
\label{app:wn18_pca}

\begin{figure*}[!ht]
	\centering
	\includegraphics[width=0.8\textwidth]{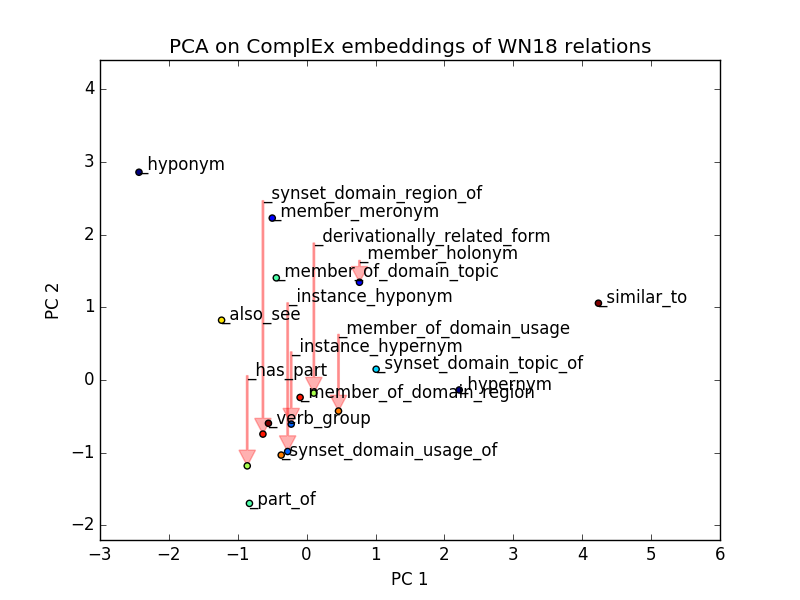}
	\includegraphics[width=0.8\textwidth]{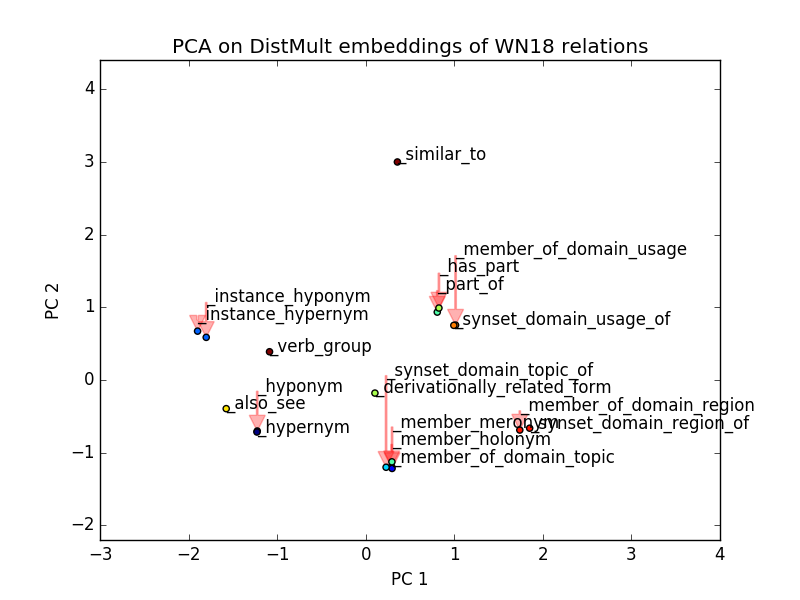}
	\vspace{-5mm}
	\caption{Plots of the first and second  components of the WN18 relations embeddings using principal component analysis. Red arrows link the labels to their point. Top: \textsc{ComplEx} embeddings. Bottom: \textsc{DistMult} embeddings. Opposite relations are clustered together by \textsc{DistMult} while correctly separated by \textsc{ComplEx}.}
	\label{fig:pca12}
\end{figure*}

\begin{figure*}[!ht]
	\centering
	\includegraphics[width=0.8\textwidth]{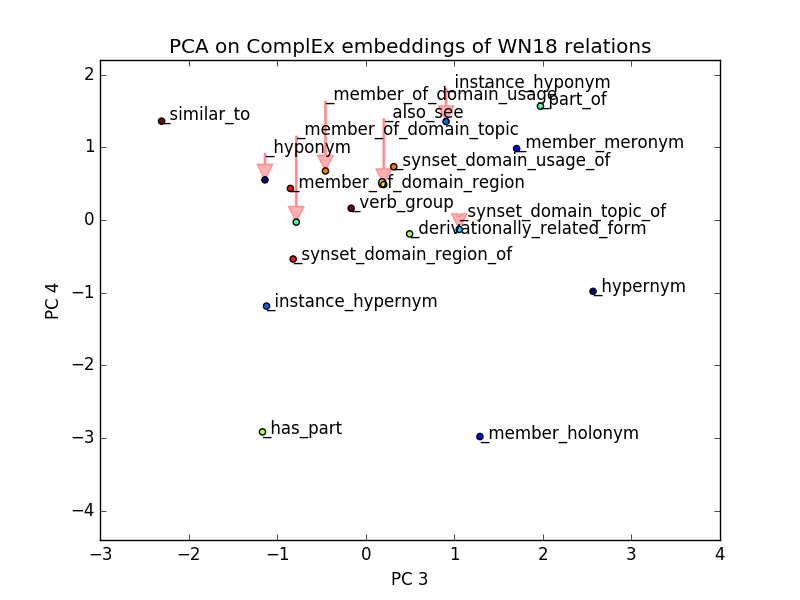}
	\includegraphics[width=0.8\textwidth]{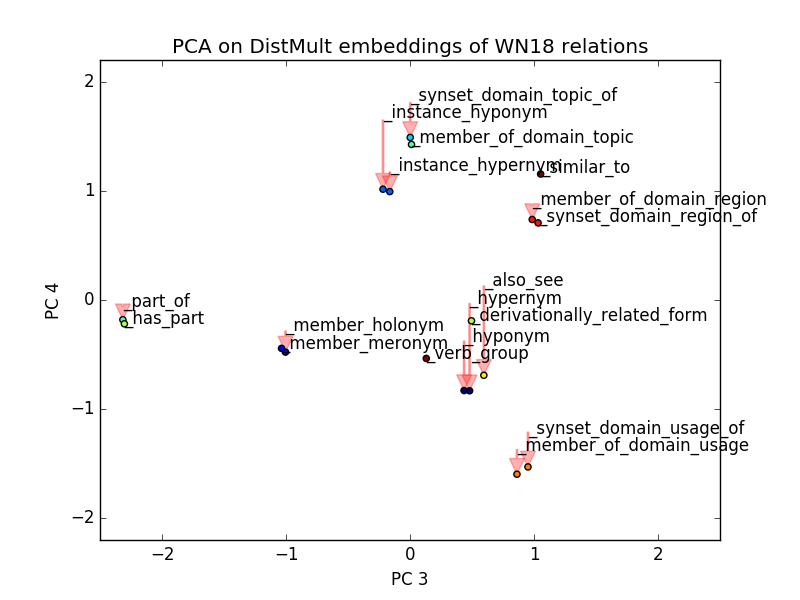}
	\vspace{-5mm}
	\caption{Plots of the third and fourth components of the WN18 relations embeddings using principal component analysis. Red arrows link the labels to their point. Top: \textsc{ComplEx} embeddings. Bottom: \textsc{DistMult} embeddings. Opposite relations are clustered together by \textsc{DistMult} while correctly separated by \textsc{ComplEx}.}
	\label{fig:pca34}
\end{figure*}

We used principal component analysis (PCA) to visualize embeddings of the relations 
of the WordNet data set (WN18). We plotted the four first components of the best \textsc{DistMult}
and \textsc{ComplEx} model's embeddings in Figures \ref{fig:pca12} \& \ref{fig:pca34}. For the \textsc{ComplEx} model, we simply concatenated
the real and imaginary parts of each embedding. 

Most of WN18 relations describe hierarchies, and are thus antisymmetric.
Each of these hierarchic relations has its inverse relation in the data set. For example: \texttt{hypernym} / \texttt{hyponym},
\texttt{part\_of} / \texttt{has\_part}, \texttt{synset\_domain\_topic\_of} / \texttt{member\_of\_domain\_topic}.
Since \textsc{DistMult} is unable to model antisymmetry, it will correctly represent the nature
of each pair of opposite relations, but not the direction of the relations.
Loosely speaking, in the \texttt{hypernym} / \texttt{hyponym} pair the nature is 
sharing semantics,
and the direction is that one entity generalizes the semantics of the other. 
This makes \textsc{DistMult} representing the opposite relations with very close embeddings.
It is especially striking for the third and
fourth principal component (Figure \ref{fig:pca34}). Conversely, \textsc{ComplEx} manages to oppose spatially
the opposite relations.

\section{Related Work}
\label{sec:rel_work}
We first discuss related work about complex-valued matrix and tensor decompositions,
and then review other approaches for knowledge graph completion.

\subsection{Complex Numbers}

When factorization methods are applied, the representation of the decomposition
is generally chosen in
accordance with the data, despite the fact that most real square 
matrices only have eigenvalues in the complex domain.
Indeed in the machine learning community, the data is usually real-valued,
and thus eigendecomposition is used for symmetric matrices, or other
decompositions such as (real-valued) singular value decomposition \cite{beltrami1873sulle}, 
non-negative matrix factorization \cite{paatero1994positive}, or canonical polyadic
decomposition when it comes to tensors \cite{hitchcock-sum-1927}.

Conversely, in signal processing, data is often complex-valued \cite{stoica2005spectral}
and the complex-valued counterparts of these decompositions are then used.
Joint diagonalization is also a much more common tool than in machine learning
for decomposing sets of (complex) dense square matrices
\cite{belouchrani1997blind,de2001independent}.

Some works on recommender systems use complex numbers as an encoding facility,
to merge two real-valued relations, 
similarity and liking, into one single complex-valued matrix which is
then decomposed with complex embeddings \cite{kunegis2012online,xie2015link}. 
Still, unlike our work, it is not real
data that is decomposed in the complex domain.

In deep learning, \citet{danihelka2016associative} proposed an LSTM 
extended with an associative memory based on complex-valued vectors for memorization tasks,
and \citet{hu2016initial} a complex-valued neural network
for speech synthesis. In both cases again, the data is first encoded in complex
vectors that are then fed into the network.

Conversely to these contributions, 
this work suggests that processing real-valued data
with complex-valued representation, through a projection
onto the real-valued subspace, can be a very simple way of 
increasing the expressiveness of the model considered.

\subsection{Knowledge Graph Completion}

Many knowledge graphs have recently arisen, pushed by the W3C recommendation
to use the resource description framework (RDF) \cite{cyganiak2014rdf} for 
data representation.
Examples of such knowledge graphs include DBPedia~\cite{dbpedia}, 
Freebase~\cite{Bollacker2008} and the Google Knowledge Vault
\cite{Dong:2014:KnowledgeVault}. Motivating applications of knowledge graph completion
include question answering \cite{bordes2014open} and more generally
probabilistic querying of knowledge bases \cite{huang2009query,krompass2014querying}.

First approaches to relational learning relied upon probabilistic graphical models \cite{Getoor2007}, such as bayesian networks \cite{Friedman1999} and markov 
logic networks \cite{richardson2006markov,raedt2016statistical}.

With the first embedding models, asymmetry of relations was quickly seen as 
a problem and asymmetric extensions of tensors were studied, mostly by either
considering independent embeddings~\cite{franz2009triplerank} or considering relations as
matrices instead of vectors in the RESCAL model~\cite{Nickel2011},
or both \cite{Sutskever2009}.
Direct extensions were based on uni-,bi- and trigram latent factors for triple data \cite{garcia2016combining}, as well as a low-rank relation matrix~\cite{Jenatton2012}.
\citet{bordes2014semantic} propose a two-layer model where subject and object embeddings
are first separately combined with the relation embedding, then each intermediate
representation is combined into the final score.

Pairwise interaction models were also considered to improve prediction performances. For example, the Universal Schema approach~\cite{riedel_2013_univschema} factorizes a 2D unfolding of the tensor (a matrix of entity pairs vs. relations) while \citet{Welbl2016} extend this also to other pairs.
\citet{riedel_2013_univschema} also consider augmenting the knowledge graph facts 
by exctracting them from textual data,
as does \citet{Toutanova2015}. Injecting prior knowledge in the form 
of Horn clauses in the objective loss of the Universal Schema model has also been considered \cite{Rocktaschel2015}. \citet{Chang2014} enhance the RESCAL model to take
into account information about the entity types. 
For recommender systems (thus with different subject/object sets of entities), \citet{baruch2014ternary} proposed a non-commutative
extension of the CP decomposition model.
More recently, Gaifman models that learn neighborhood embeddings of local
structures in the knowledge graph showed competitive performances
\cite{niepert2016discriminative}.

In the Neural Tensor Network (NTN) model, \citet{socher2013reasoning} combine linear transformations and multiple bilinear forms of subject and object embeddings to jointly feed them into a nonlinear neural layer. Its non-linearity and multiple ways of including interactions between embeddings gives it an advantage in expressiveness over models with simpler scoring function like \textsc{DistMult} or RESCAL. As a downside, its very large number of parameters can make the NTN model harder to train and overfit more easily.

The original multilinear \textsc{DistMult} model is symmetric in subject and object for every relation ~\cite{Yang2015} and achieves good performance on FB15K and WN18 data sets. However it is likely due
to the absence of true negatives in these data sets, as discussed in Section \ref{sec:fb15k_res}.

The \textsc{TransE} model from \citet{bordes2013translating} also embeds entities and relations in the same space and imposes a geometrical structural bias into the model: the subject entity vector should be close to the object entity vector once translated by the relation vector.

A recent novel way to handle antisymmetry is via the Holographic Embeddings (\textsc{HolE}) model by \citet{nickel_2016_holographic}. In \textsc{HolE} the circular correlation is used for combining entity embeddings, measuring the covariance between embeddings at different dimension shifts. This model has been shown to be equivalent to the \textsc{ComplEx} model \cite{hayashi2017equivalence,trouillon2017comparison}.

\section{Discussion and Future Work}

Though the decomposition proposed in this paper is clearly not unique, it is able to 
learn meaningful representations.
Still, characterizing all possible unitary diagonalizations that preserve
the real part is an interesting open question. Especially in 
an approximation setting with a constrained rank, in order to characterize the
decompositions that minimize a given reconstruction error.
That
might allow the creation of an iterative 
algorithm similar to eigendecomposition iterative methods
\cite{saad1992numerical} for computing such a
decomposition for any given real square matrix.

The proposed decomposition could also find applications in many other asymmetric square matrices
decompositions applications, such as spectral graph theory for 
directed graphs \cite{cvetkovic1997eigenspaces}, but also factorization of asymmetric
measures matrices such as asymmetric distance matrices \cite{mao2004modeling} 
and asymmetric similarity matrices \cite{pirasteh2015exploiting}.

From an optimization point of view, the objective function (Equation~(\ref{eq:objective})) 
is clearly non-convex,
and we could indeed not be reaching a globally optimal decomposition using 
stochastic gradient descent. Recent results show that 
there are no spurious local minima in the completion problem
of positive semi-definite matrix
\cite{ge2016matrix,bhojanapalli2016global}. Studying the extensibility
of these results to our decomposition is another possible line of future work.
The first step would be generalizing these results to 
symmetric real-valued matrix completion, then 
generalization to normal matrices should be straightforward.
The two last steps would be extending to matrices that are expressed as real
part of normal matrices, and finally to the joint decomposition 
of such matrices as a tensor.
We indeed noticed a remarkable stability of the scores across different 
random initialization of \textsc{ComplEx} for the same hyper-parameters, which suggests the
possibility of such theoretical property.

Practically, an obvious extension is to merge our approach with known extensions to tensor factorization models in order to further improve predictive performance. For example, the use of pairwise embeddings \cite{riedel_2013_univschema,Welbl2016} together with complex numbers might lead to improved results in many situations that involve non-compositionality. Adding bigram embeddings to the objective
could also improve the results as shown on other models \cite{garcia2016combining}.
Another direction would be to develop a more intelligent negative sampling procedure, to generate more informative negatives with respect to the positive sample from which they have been sampled. This would reduce the number of negatives required to reach good performance, thus accelerating training time.
Extension to relations between more than two entities, $n$-tuples,
is not straightforward, as \textsc{ComplEx}'s expressiveness comes from the complex conjugation
of the object-entity, that breaks the symmetry between the subject and object embeddings
in the scoring function. This stems from the Hermitian product, which seems to have no standard multilinear extension in the linear algebra literature, this question
hence remains largely open.

\section{Conclusion}

We described a new matrix and tensor decomposition with complex-valued latent factors called \textsc{ComplEx}.
The decomposition exists for all real square matrices, expressed
as the real part of normal matrices.
The result extends to sets of real square matrices---tensors---and answers to the 
requirements of the knowledge graph
completion task : handling a large variety of different relations including antisymmetric and
asymmetric ones, while being scalable.
Experiments confirm its theoretical versatility, as it substantially improves
over the state-of-the-art on real knowledge graphs.
It shows that real world relations can be efficiently approximated as
the real part of low-rank normal matrices.
The generality of the theoretical results and the effectiveness of the
experimental ones motivate for the application to other real square matrices
factorization problems. 
More generally, we hope that this paper will stimulate
the use of complex linear algebra in the machine learning community,
even and especially for processing real-valued data.


\acks{This work was supported in part by the Association Nationale de la
Recherche et de la Technologie through the CIFRE grant 2014/0121, in part by
the Paul Allen Foundation through an Allen Distinguished
Investigator grant, and in part by a Google Focused Research Award.
We would like to thank Ariadna Quattoni, St\'ephane Clinchant, Jean-Marc Andr\'eoli, Sofia Michel, Alejandro Blumentals, L\'eo Hubert and Pierre Comon for their helpful comments and feedback. }






\vskip 0.2in
\bibliography{nonauto_bib,complex_bib}

\end{document}